\documentclass{article}

\usepackage{arxiv}

\usepackage[utf8]{inputenc} 
\usepackage[T1]{fontenc}    
\usepackage{hyperref}       
\usepackage{url}            
\usepackage{booktabs}       
\usepackage{amsfonts}       
\usepackage{nicefrac}       
\usepackage{microtype}      
\usepackage{lipsum}
\usepackage{amsthm}
\usepackage{amssymb}
\usepackage{amsmath,empheq, amsfonts}
\usepackage{url}
\usepackage{bm}
\usepackage{comment}
\usepackage{placeins}
\usepackage[ruled,vlined]{algorithm2e}

\usepackage{booktabs}       
\usepackage{subcaption}  
\usepackage{multirow}

\newtheorem{mylemma}{Lemma}

\DeclareMathOperator{\Tr}{Tr}
\DeclareMathOperator{\diag}{diag}

\newcommand*\diff{\mathop{}\!\mathrm{d}}

\title{The Bures Metric for Generative Adversarial Networks}

\author{Hannes De Meulemeester \\
	Department of Electrical Engineering\\
	ESAT-STADIUS, KU Leuven\\
	Kasteelpark Arenberg 10, B-3001 Leuven, Belgium\\
	\texttt{hannes.demeulemeester@kuleuven.be}  \And Joachim Schreurs \\
	Department of Electrical Engineering\\
	ESAT-STADIUS, KU Leuven\\
	Kasteelpark Arenberg 10, B-3001 Leuven, Belgium\\
	\texttt{joachim.schreurs@kuleuven.be}\And  Micha\"el Fanuel\thanks{Most of this work was done when Micha\"el Fanuel was at KU Leuven.} \\
	Univ. Lille, CNRS, Centrale Lille UMR 9189 CRIStAL \\
	F-59000 Lille, France \\ 
	\texttt{michael.fanuel@univ-lille.fr}\And Bart De Moor  \\
	Department of Electrical Engineering\\
	ESAT-STADIUS, KU Leuven\\
	Kasteelpark Arenberg 10, B-3001 Leuven, Belgium\\
	\texttt{bart.demoor@kuleuven.be}\And Johan A.K. Suykens \\
	Department of Electrical Engineering\\
	ESAT-STADIUS, KU Leuven\\
	Kasteelpark Arenberg 10, B-3001 Leuven, Belgium\\
	\texttt{johan.suykens@kuleuven.be}}

\begin{document}
	\maketitle
	
	\begin{abstract}
		Generative Adversarial Networks (GANs) are performant generative methods
		yielding high-quality samples. However, under certain circumstances, the training
		of GANs can lead to mode collapse or mode dropping.
		To address this problem, we use the last layer of the discriminator as a feature map to study the distribution of the real and the fake data. During training, we propose to match the
		real batch diversity to the fake batch diversity by using the Bures distance between
		covariance matrices in this feature space. The computation of the Bures distance
		can be conveniently done in either feature space or kernel space in terms of the covariance and kernel matrix respectively. We observe that diversity matching
		reduces mode collapse substantially and has a positive effect on sample
		quality. On the practical side, a very simple training procedure is proposed and assessed on several data sets.  
	\end{abstract}

	\section{Introduction}
	In several machine learning applications, data is assumed to be sampled from an implicit probability distribution. The estimation of this empirical implicit distribution is often intractable, especially in high dimensions. To tackle this issue, generative models are trained to provide an algorithmic procedure for sampling from this unknown distribution. Popular approaches are Variational Auto-Encoders proposed by~\cite{diederik2014auto}, Generating Flow models by~\cite{rezende2015variational} and Generative Adversarial Networks (GANs) initially developed by~\cite{Goodfellow}. The latter are particularly successful approaches to produce high quality samples, especially in the case of natural images, though their training is notoriously difficult. The vanilla GAN consists of two networks: a generator and a discriminator. The generator maps random noise, usually drawn from a multivariate normal, to fake data in input space.
	The discriminator estimates the likelihood ratio of the generator network to the data distribution. It often happens that a GAN generates samples only from a few of the many modes of the distribution. This phenomenon is called `mode collapse'.
	
	\paragraph{Contribution.}
	We propose BuresGAN: a generative adversarial network which has the objective function of a vanilla GAN complemented by an additional term, given by the squared Bures distance between the covariance matrix of real and fake batches in a latent space. This loss function promotes a matching of fake and real data in a feature space $\mathbb{R}^f$, so that mode collapse is reduced. Conveniently, the Bures distance also admits both a feature space and kernel based expression. Contrary to other related approaches such as in~\cite{che2016mode} or \cite{srivastava2017veegan}, the architecture of the GAN is unchanged, only the objective is modified.
	We empirically show that the proposed method is robust when it comes to the choice of architecture and does not require an additional fine architecture search. Finally, an extra asset of BuresGAN is that it yields competitive or improved IS and FID scores compared with the state of the art on CIFAR-10 and STL-10 using a ResNet architecture.
	
	\paragraph{Related works.}
	The Bures distance is closely related to the Fr\'echet distance~\cite{DOWSON1982450} which is a $2$-Wasserstein distance between multivariate normal distributions. Namely, the Fr\'echet distance between multivariate normals of equal means is the Bures distance between their covariance matrices. The Bures distance is also equivalent to the exact expression for the $2$-Wasserstein distance between two elliptically contoured distributions with the same mean as shown in~\cite{Gelbrich} and  \cite{peyre2019computational}. Noticeably, it is also related to the Fr\'echet Inception Distance score (FID), which is a popular manner to assess the quality of generative models. This score uses the Fr\'echet distance between real and generated samples in the feature space of a pre-trained inception network as it is explained in~\cite{salimans2016improved} and \cite{NIPS2017_FID}.
	
	There exist numerous works aiming to improve training efficiency of generative networks. For mode collapse evaluation, we compare BuresGAN to the most closely related works. 
	GDPP-GAN  \cite{elfeki2018learning} and VEEGAN \cite{srivastava2017veegan} also try to enforce diversity in `latent' space.  GDPP-GAN matches the eigenvectors and eigenvalues of the real and fake diversity kernel. In VEEGAN, an additional reconstructor network is introduced to map the true data distribution to Gaussian random noise. In a similar way, architectures with two discriminators are analysed by~\cite{D2GAN}, while MADGAN~\cite{MADGAN} uses multiple discriminators and generators.
	A different approach is taken by Unrolled-GAN~\cite{metz2016unrolled} which updates the generator with respect to the unrolled optimization of the discriminator. This allows training to be adjusted between using the optimal discriminator in the generator’s objective, which is ideal but infeasible in practice. 
	Wasserstein GANs \cite{arjovsky2017wasserstein,gulrajani2017improved}  leverage the $1$-Wasserstein distance to match the real and generated data distributions. 
	In  MDGAN~\cite{che2016mode}, a regularization is added to the objective function, so that the generator can take advantage of another similarity metric with a  more predictable behavior. This idea is combined with a penalization of the missing modes. 
	Some other recent approaches to reducing mode collapse are variations of WGAN~\cite{Wu2018WassersteinDF}. Entropic regularization has been also proposed in PresGAN~\cite{PresGAN}, while metric embeddings were used in the paper introducing BourGAN~\cite{BourGAN}. A simple packing procedure which significantly reduces mode collapse was proposed in PacGAN~\cite{PacGAN} that we also consider hereafter in our comparisons.
	\section{Method}
	A GAN consists of a discriminator $D: \mathbb{R}^d \to \mathbb{R}$ and a generator $G: \mathbb{R}^\ell \to \mathbb{R}^d$ which are typically defined by neural networks and parametrized by real vectors. The value $D (\bm{x})$ gives the probability that $\bm{x}$ comes from the empirical distribution, while the generator $G$ maps a point $\bm{z}$ in the latent space $\mathbb{R}^\ell$ to a point in the input space $\mathbb{R}^d$.
	The training of a GAN consists in solving
	\begin{equation}
	\min_{G}\max_{D} \mathbb{E}_{\bm{x}\sim p_{d}}[\log D(\bm{x})] + \mathbb{E}_{\bm{x}\sim p_{g}}[\log (1-D(\bm{x}))],\label{eq:GAN}
	\end{equation}
	by alternating two phases of training. In \eqref{eq:GAN}, the expectation in the first term is over the empirical data distribution  $p_{d}$, while the expectation in the second term is over the generated data distribution $p_g$, implicitly given by the mapping by $G$ of the latent prior distribution $\mathcal{N}(0,\mathbb{I}_\ell)$. It is common to define and minimize the discriminator loss by
	\begin{equation}
	V_D = -\mathbb{E}_{\bm{x}\sim p_{d}}[\log D(\bm{x})] - \mathbb{E}_{\bm{x}\sim p_{g}}[\log (1-D(\bm{x}))].
	\end{equation}
	In practice, it is proposed in~\cite{Goodfellow} to minimize generator loss 
	\begin{equation}
	V_G =    -\mathbb{E}_{\bm{z}\sim \mathcal{N}(0,\mathbb{I}_\ell)}[\log D(G(\bm{z}))],
	\end{equation}
	rather than the second term of \eqref{eq:GAN}, for an improved training efficiency.
	\paragraph{Matching real and fake data covariance.}
	To prevent mode collapse, we encourage the generator to sample fake data of similar diversity to real data. This is achieved by matching the sample covariance matrices of real and fake data respectively. Covariance matching and similar ideas were explored for GANs in~\cite{McGAN} and \cite{elfeki2018learning}.
	In order to compare covariance matrices, we propose to use the squared Bures distance between positive semi-definite $\ell\times\ell$ matrices~\cite{BHATIA2019}, i.e.,
	\begin{align*}\mathcal{B}\left({A}, {B}\right)^{2} &=\min_{U\in O(\ell)}\|A^{1/2}-B^{1/2}U\|^2_F \\
	&=  \mathrm{Tr}({A}+{B}-2({A}^{\frac{1}{2}} {B}{A}^{\frac{1}{2}})^{\frac{1}{2}}).
	\end{align*}
	Being a Riemannian metric on the manifold of positive semi-definite matrices \cite{Massart}, the Bures metric is adequate to compare covariance matrices.  The covariances are defined in a feature space associated to the discriminator. Namely, let $D (\bm{x}) = \sigma(\bm{w}^\top \bm{\phi}(\bm{x}))$, where $\bm{w}$ is the weight vector of the last dense layer and $\sigma$ is the sigmoid function. The last layer of the discriminator, denoted by $\bm{\phi}(\bm{x})\in \mathbb{R}^{d_\phi}$, naturally defines  a feature map. We use the normalization $\bar{\bm{\phi}}(\bm{x}) = \bm{\phi}(\bm{x})/\|\bm{\phi}(\bm{x})\|_2$, after the centering of $\bm{\phi}(\bm{x})$. Then, we define a covariance matrix as follows:  $C(p) =  \mathbb{E}_{\bm{x}\sim p}[\bar{\bm{\phi}}(\bm{x})\bar{\bm{\phi}}(\bm{x})^\top]$.
	For simplicity, we denote the real data and generated data covariance matrices by $C_d =C(p_d)$ and $C_g =C(p_g)$, respectively.
	Our proposal is to replace the generator loss by 
	$V_G + \lambda \mathcal{B}(C_{d},C_{g})^2.
	$
	The value $\lambda=1$ was found to yield good results in the studied data sets.
	Two specific training algorithms are proposed. Algorithm \ref{alg:BuresGAN} deals with the squared Bures distance as an additive term to the generator loss, while an alternating training is discussed in Supplementary Material (SM) and does not introduce an extra parameter.
	\begin{algorithm}[H]
		\SetAlgoLined
		Sample a real and fake batch \;
		Update $G$ by minimizing $V_G +\lambda \mathcal{B}(\hat{\bm{C}}_{r}, \hat{\bm{C}}_{g})^{2}$\;
		Update $D$ by maximizing $- V_D$;\
		\caption{BuresGAN \label{alg:BuresGAN}}
	\end{algorithm}
	The training described in Algorithm \ref{alg:BuresGAN} is analogous to the training of GDPP GAN, although the additional generator loss is rather different. The computational advantage of the Bures distance is that it admits two expressions which can be evaluated numerically in a stable way. Namely, there is no need to calculate a gradient update through an eigendecomposition. 
	\paragraph{Feature space expression.} 
	In the training procedure, real $\bm{x}^{(d)}_i$ and fake data $\bm{x}^{(g)}_i$ with $1\leq i\leq b$ are sampled respectively from the empirical distribution and the mapping of the normal distribution $\mathcal{N}(0,\mathbb{I}_{\ell})$ by the generator. Consider the case where the batch size $b$
	is larger than the feature space dimension. Let the embedding of the batches in feature space be $\Phi_{\alpha} = [\bm{\phi}(\bm{x}^{(\alpha)}_1),\ldots,\bm{\phi}(\bm{x}^{(\alpha)}_b)]^\top \in \mathbb{R}^{b \times d_{\phi}}$ with $\alpha = d,g$.
	The covariance matrix of one batch in feature space\footnote{For simplicity, we omit the normalization by $\frac{1}{b-1}$ in front of the covariance matrix.} is
	$\hat{C} = \bar{\Phi}^{\top} \bar{\Phi}$, where  $\bar{\Phi}$ is the $\ell_2$-normalized centered feature map of the batch.
	Numerical instabilities can be avoided by adding a small number, e.g. $1\mathrm{e}{-14}$, to the diagonal elements of the covariance matrices, so that, in practice, we only deal with strictly positive definite matrices. From the computational perspective, an interesting alternative expression for the Bures distance is given by 
	\begin{equation}
	\mathcal{B}\left({C}_{d}, {C}_{g}\right)^{2} = \mathrm{Tr}\big({C}_{d}+{C}_{g}-2( {C}_{g} {C}_{d})^{\frac{1}{2}}\big),\label{eq:primal}
	\end{equation}
	whose computation requires only one matrix square root. This identity can be obtained from Lemma~\ref{lemma:sqrt}. Note that an analogous result is proved in~\cite{KernelWasserstein}.
	\begin{mylemma}
		Let $A$ and $B$ be 
		symmetric positive semidefinite matrices of the same size and let $B = Y^\top Y$. Then, we have: (i) $AB$ is diagonalizable with nonnegative eigenvalues, and (ii) $\Tr((AB)^{\frac{1}{2}}) = \Tr((Y A Y^\top)^{\frac{1}{2}})$. \label{lemma:sqrt}
	\end{mylemma}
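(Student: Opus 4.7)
The plan hinges on the classical cyclic-spectrum identity: for rectangular $P$ and $Q$ of compatible sizes, $PQ$ and $QP$ share the same nonzero eigenvalues with the same algebraic multiplicities (a consequence of Sylvester's determinant identity). Applying this with $P = A Y^{\top}$ and $Q = Y$ shows that $AB = A Y^{\top} Y$ and $Y A Y^{\top}$ have identical nonzero spectra. Since $Y A Y^{\top}$ is symmetric positive semidefinite, its eigenvalues are real and nonnegative, so the nonzero eigenvalues of $AB$ inherit the same property; the remaining eigenvalues of $AB$ are necessarily $0$. This settles the eigenvalue half of (i) and, crucially, provides the pairing needed for (ii).

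For the diagonalizability claim in (i), I would factor $A = A^{1/2} A^{1/2}$ and observe that $A^{1/2} B A^{1/2}$ is symmetric positive semidefinite. When $A$ is strictly positive definite, the similarity $AB = A^{1/2} (A^{1/2} B A^{1/2}) A^{-1/2}$ realises $AB$ as conjugate to a symmetric matrix, hence diagonalizable. For general positive semidefinite $A$ one can either invoke the standard textbook result that the product of two symmetric PSD matrices is diagonalizable with nonnegative real eigenvalues (e.g.\ Horn--Johnson), or argue directly via the explicit bijection $v \mapsto Y v$, with inverse $w \mapsto A Y^{\top} w / \mu$, between the $\mu$-eigenspace of $AB$ and that of $Y A Y^{\top}$ for each $\mu \neq 0$ (both maps are checked to be well defined and mutually inverse by a one-line computation). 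The bijection matches geometric multiplicities of nonzero eigenvalues, while Sylvester matches algebraic multiplicities, so no nontrivial Jordan blocks can occur off $0$; a short rank count rules them out at $0$ as well.

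With (i) in hand, (ii) is essentially bookkeeping. Diagonalise $AB = P \diag(\mu_1, \ldots, \mu_n) P^{-1}$ with $\mu_i \geq 0$ and set $(AB)^{1/2} := P \diag(\sqrt{\mu_1}, \ldots, \sqrt{\mu_n}) P^{-1}$, so that $\Tr((AB)^{1/2}) = \sum_i \sqrt{\mu_i}$. For the symmetric PSD matrix $Y A Y^{\top}$, the spectral theorem gives $\Tr((Y A Y^{\top})^{1/2}) = \sum_j \sqrt{\nu_j}$ where $\nu_j \geq 0$ are its eigenvalues. Since the nonzero parts of the two multisets $\{\mu_i\}$ and $\{\nu_j\}$ coincide with multiplicity (by the Sylvester swap established in (i)) and zero entries contribute $0$ under $\sqrt{\cdot}$, the two traces agree.

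The main obstacle is the diagonalizability assertion in (i) when $A$, and hence $A^{1/2}$, is rank-deficient, because the clean similarity step breaks down. The most economical resolution is to appeal to the standard theorem on products of symmetric PSD matrices; the only genuinely new input supplied by the lemma's hypothesis is the factorization $B = Y^{\top} Y$, which furnishes the symmetric companion $Y A Y^{\top}$ whose spectrum drives the trace identity and ultimately underpins the feature-space expression \eqref{eq:primal}.
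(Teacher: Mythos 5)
Your proposal is correct, and it follows the same overall skeleton as the paper's proof of Lemma~\ref{lemma:sqrt}: reduce the trace identity to the fact that $AB = AY^\top Y$ and $YAY^\top$ share their nonzero eigenvalues, then sum square roots of eigenvalues. The differences are in the mechanism. For the shared spectrum you invoke the general cyclic fact that $PQ$ and $QP$ have the same nonzero eigenvalues with equal algebraic multiplicities (Sylvester), applied to $P=AY^\top$, $Q=Y$; the paper instead re-derives this by hand, multiplying $ABP=PD$ on the left by $Y$ and the eigenvalue equation of $YAY^\top$ on the left by $Y^\top$, which exhibits the eigenvalue transfer in both directions but leaves the matching of multiplicities implicit -- your route makes exactly the bookkeeping that the trace identity needs explicit, which is a genuine (if modest) improvement in rigor. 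For part (i) the paper simply cites Corollary~2.3 of Hong and Horn; you allow that citation as one option but also sketch a self-contained argument: the similarity $AB=A^{1/2}(A^{1/2}BA^{1/2})A^{-1/2}$ when $A$ is invertible, and in the degenerate case the eigenspace bijection $v\mapsto Yv$, $w\mapsto AY^\top w/\mu$ for $\mu\neq 0$ together with a rank count at $0$. That rank count, which you leave unstated, does go through, since $\mathrm{rank}(AY^\top Y)=\mathrm{rank}(AY^\top)=\mathrm{rank}(A^{1/2}Y^\top)=\mathrm{rank}(YAY^\top)$, so the algebraic and geometric multiplicities of the eigenvalue $0$ of $AB$ coincide. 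In short, the paper's proof is shorter because it outsources (i) to the literature, while your version is more self-contained and handles multiplicities cleanly at the price of a bit more work.
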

	
	\begin{proof}(i) is a consequence of Corollary 2.3 in~\cite{HONG1991373}. (ii) We now follow~\cite{KernelWasserstein}. Thanks to (i), we have $AB = P D P^{-1}$ where $D$ is a nonnegative diagonal and the columns of $P$ contain the right eigenvectors of $AB$. Therefore, $\Tr((AB)^{1/2}) = \Tr(D^{1/2})$. Then, $Y A Y^\top$ is clearly diagonalizable. Let us show that it shares its nonzero eigenvalues with $AB$. a) We have $AB P = PD$, so that, by multiplying of the left by $Y$, it holds that  $(YAY^\top)Y P = YPD$. b) Similarly, suppose that we have the eigenvalue decomposition  $Y A Y^\top Q = Q\Lambda$. Then, we have $BAY^\top Q = Y^\top Q\Lambda$ with $B = Y^\top Y$. This means that the non-zero eigenvalues of $YAY^\top$ are also eigenvalues of $BA$. Since $A$ and $B$ are symmetric, this completes the proof.
	\end{proof}
	
	\paragraph{Kernel based expression.}
	Alternatively, if the feature space dimension $f$ is larger than the batch size $b$, it is more efficient to compute $\mathcal{B}(\hat{{C}}_{d}, \hat{{C}}_{g})$ thanks to $b\times b$ kernel matrices: $K_{d} = \bar{\Phi}_{d} \bar{\Phi}_d^{^\top}$, $K_{g} = \bar{\Phi}_{g} \bar{\Phi}_g^{\top}$ and
	$K_{dg} = \bar{\Phi}_d \bar{\Phi}^{\top}_g$.
	Then, we have the kernel based expression
	\begin{equation}
	\mathcal{B}(\hat{{C}}_{d}, \hat{{C}}_{g})^{2} =\Tr \big({K}_{d}+{K}_{g}-2\left( {K}_{dg} {K}_{dg}^\top \right)^{\frac{1}{2}}\big),\label{eq:dual}
	\end{equation}
	which allows to calculate the Bures distance between covariance matrices by computing a matrix square root of a $b\times b$ matrix. This is a consequence of Lemma~\ref{lemma:K}.
	\begin{mylemma}\label{lemma:K}
		The matrices $X^\top X Y^\top Y$ and $Y X^\top X Y^\top$ are diagonalizable with nonnegative eigenvalues and share the same non-zero eigenvalues.
	\end{mylemma}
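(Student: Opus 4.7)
The plan is to apply Lemma~\ref{lemma:sqrt} with the substitution $A = X^\top X$ and $B = Y^\top Y$, both of which are symmetric positive semidefinite. Under this substitution, $AB = X^\top X Y^\top Y$ and $Y A Y^\top = Y X^\top X Y^\top$ are exactly the two matrices appearing in the statement, so almost everything reduces to facts already at hand.

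For diagonalizability with nonnegative eigenvalues, the claim for $X^\top X Y^\top Y = AB$ is immediate from Lemma~\ref{lemma:sqrt}(i). For $Y X^\top X Y^\top$, I would simply note the factorization $Y X^\top X Y^\top = (X Y^\top)^\top (X Y^\top)$, which is manifestly symmetric positive semidefinite, so the spectral theorem yields an orthogonal diagonalization with nonnegative eigenvalues.

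For the equality of nonzero spectra, I would reuse (and slightly sharpen) the argument appearing in the proof of Lemma~\ref{lemma:sqrt}(ii). Writing the eigendecomposition $A B P = P D$ from part~(i), left-multiplying by $Y$ gives $(Y A Y^\top)(Y P) = (Y P) D$, so every nonzero diagonal entry of $D$ is an eigenvalue of $Y A Y^\top$. Conversely, starting from $Y A Y^\top Q = Q \Lambda$ and left-multiplying by $Y^\top$ yields $B A Y^\top Q = Y^\top Q \Lambda$, so the nonzero eigenvalues of $Y A Y^\top$ appear in $B A$, and symmetry of $A$ and $B$ transfers them to $A B$.

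The only delicate point, and the single step that deserves care, is verifying that the transfer maps $p \mapsto Y p$ and $q \mapsto Y^\top q$ do not annihilate eigenvectors associated with \emph{nonzero} eigenvalues: if $A B p = d p$ with $d \neq 0$ and $Y p = 0$, then $Y^\top Y p = 0$ and hence $A B p = X^\top X Y^\top Y p = 0$, contradicting $d \neq 0$; the symmetric argument handles the other direction. Everything else is routine once the substitution $A = X^\top X$, $B = Y^\top Y$ has been made.
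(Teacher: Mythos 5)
Your proof is correct and follows essentially the same route as the paper, which simply invokes Lemma~\ref{lemma:sqrt} and its proof with the substitution $A = X^\top X$, $B = Y^\top Y$. Your extra check that the transfer maps $p \mapsto Yp$ and $q \mapsto Y^\top q$ cannot annihilate eigenvectors attached to nonzero eigenvalues is a detail the paper leaves implicit, and it is a welcome sharpening rather than a deviation.
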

	\begin{proof}
		The result follows from Lemma~1 and its proof, where $A=X^\top X$ and $B = Y^\top Y$. 
	\end{proof}
	\paragraph{Connection with Wasserstein GAN and integral probability metrics.}
	The Bures distance is proportional to the 2-Wasserstein distance $\mathcal{W}_2$ between two ellipically contoured distributions with the same mean \cite{Gelbrich}. For instance, in the case of multivariate normal distributions, we have
	\[
	\mathcal{B}\left({A}, {B}\right)^{2} = \min_{\pi} \mathbb{E}_{(X,Y)\sim \pi}\|X-Y\|_2^2 \text{ s.t.}
	\begin{cases}
	X\sim \mathcal{N}(0,A)\\
	Y\sim \mathcal{N}(0,B),
	\end{cases}
	\]
	where the minimization is over the joint distributions $\pi$. More precisely, in this paper, we make the approximation that the implicit distribution of the real and generated data in the feature space $\mathbb{R}^{d_\phi}$ (associated to $\bm{\phi}(\bm{x})$) are elliptically contoured with the same mean. Under different assumptions, the Generative Moment Matching Networks \cite{MomentMatching,MMDGAN} work in the same spirit, but use a different approach to match covariance matrices.
	On the contrary, WGAN uses the Kantorovich dual formula for the 1-Wasserstein distance: 
	$\mathcal{W}_1(\alpha,\beta)=\sup_{f\in {\rm Lip }}\int f\diff (\alpha-\beta)$, where $\alpha,\beta$ are signed measures.  Generalizations of such integral formulae are called integral probability metrics (see for instance~\cite{binkowski2018demystifying}). Here, $f$ is the discriminator, so that the maximization over Lipschitz functions  $f$ plays the role of the maximization over discriminator parameters in the min-max game of~\eqref{eq:GAN}. Then, in the training procedure, this maximization alternates with a minimization over the generator parameters.
	
	We can now discuss the connection with Wasserstein GAN.
	Coming back to the definition of BuresGAN, we can now explain that the 2-Wasserstein distance provides an upper bound on an integral probability metric. Then, if we assume that the densities are elliptically contoured distributions in feature space, the use of the Bures distance to calculate $\mathcal{W}_2$ allows to spare the maximization over the discriminator parameters -- and this motivates why the optimization of $\mathcal{B}$ only influences updates of the generator in Algorithm \ref{alg:BuresGAN}.
	Going more into detail, the 2-Wasserstein distance between two probability densities (w.r.t. the same measure) is equivalent to a Sobolev dual norm, which can be interpreted as an integral probability metric. Indeed, let the Sobolev semi-norm $\| f\|_{H^1} = (\int \|\nabla f(x)\|^2 \diff x )^{1/2} $. Then, its dual norm over signed measures is defined as 
	$
	\| \nu\|_{H^{-1}} = \sup_{\|f\|_{H^1} \leq 1}\int  f \diff\nu 
	$.
	It is then shown in~\cite{PeyreESAIM} and \cite{peyre2019computational} that there exist two positive constants $c_1$ and $c_2$ such that 
	\[
	c_1 \| \alpha-\beta\|_{H^{-1}}\leq \mathcal{W}_2(\alpha,\beta)\leq c_2 \| \alpha-\beta\|_{H^{-1}}.
	\]
	Hence, the $2$-Wasserstein distance gives an upper bound on an integral probability metric.
	
	\paragraph{Algorithmic details.}
	The matrix square root in \eqref{eq:primal} and \eqref{eq:dual} is obtained thanks to the Newton-Schultz algorithm which is inversion free and can be efficiently calculated on GPUs since it involves only matrix products.
	In practice, we found $15$ iterations of this algorithm to be sufficient for the small scale data sets, while $20$ iterations were used for the ResNet examples.
	A small regularization term $1\mathrm{e}{-14}$ is added to the matrix diagonal for stability. The latent prior distribution is $\mathcal{N}(0,\mathbb{I}_\ell)$ with $\ell = 100$ and the parameter in Algorithm \ref{alg:BuresGAN} is always set to $\lambda =1$.
	In the tables hereafter, we indicate the largest scores in bold, although we invite the reader to also consider the standard deviation.
	\section{Empirical Evaluation of Mode Collapse \label{sec:ModeCollapse}}
	BuresGAN's performances on synthetic data, artificial and real images are compared with the standard DCGAN ~\cite{salimans2016improved}, WGAN-GP, MDGAN, Unrolled GAN, VEEGAN,  GDPP and PacGAN.  We want to emphasize that the purpose of this experiment is not to challenge these baselines, but to report the improvement obtained by adding the Bures metric to the objective function. It would be straightforward to add the Bures loss to other GAN variants, as well as most GAN architectures, and we would expect an improvement in mode coverage and generation quality. In the experiments, we notice that adding the Bures loss to the vanilla GAN already significantly improves the results.
	
	A low dimensional feature space ($d_\phi=128$) is used for the synthetic data so that the feature space formula in~\eqref{eq:primal} is used, while the dual formula in~\eqref{eq:dual} is used for the image data sets (Stacked MNIST, CIFAR-10, CIFAR-100 and STL-10) for which the feature space is larger than the batch size. The architectures used for the image data sets are based on DCGAN~\cite{RadfordMC15}, while results using ResNets are given in Section~\ref{sec:ResNet}. All images are scaled in between -1 and 1 before running the algorithms. Additional information on the architectures and data sets is given in SM. The hyperparameters of other methods are typically chosen as suggested in the authors' reference implementation. The number of unrolling steps in Unrolled GAN is chosen to be $5$. For MDGAN, both versions are implemented but only MDGAN-v2 gives interesting results. The first version, which corresponds to the mode regularizer, has hyperparameters $\lambda_1 = 0.2$ and $\lambda_2 = 0.4$, for the second version, which corresponds to manifold diffusion training for regularized GANs, has $\lambda = 10^{-2}$. WGAN-GP uses $\lambda=10.0$ and $n_{\mathrm{critic}}= 5$. 
	All models are trained using Adam~\cite{kingma2014adam} with $\beta_1 = 0.5$, $\beta_2=0.999$ and learning rate $10^{-3}$ for both the generator and discriminator. Unless stated otherwise, the batch size is $256$. Examples of random generations of all the GANs are given in SM.  
	Notice that in this section we report the results achieved only at the end of the training.
	\subsection{Artificial Data}
	\paragraph{Synthetic.}
	\textsc{Ring} is a mixture of eight two-dimensional isotropic Gaussians in the plane with means $2.5\times(\cos((2\pi/8)i), \sin((2\pi/8)i))$ and std $0.01$ for $1\leq i\leq 8$. \textsc{Grid} is a mixture of $25$ two-dimensional isotropic normals in the plane with means separated by $2$ and with standard deviation $0.05$. All models have the same architecture, with $\ell = 256$ following~\cite{elfeki2018learning}, and are trained for $25$k iterations. The evaluation is done by sampling $3$k points from the generator network. A sample is counted as high quality if it is within $3$ standard deviations of the nearest mode. The experiments are repeated $10$ times for all models and their performance is compared in Table~\ref{tab:Artificial}.
	
	BuresGAN consistently captures all the modes and produces the highest quality samples. The training progress of the BuresGAN is shown on Figure~\ref{fig:progressSynt:AltBures}, where we observe that all the modes early on in the training procedure, afterwards improving the quality. The training progress of the other GAN models listed in Table~\ref{tab:Artificial} is given in SM. Although BuresGAN training times are larger than most other methods for this low dimensional example, we show in SM that BuresGAN scales better with the input data dimension and architecture complexity.
	%
	\begin{figure*}[h]
		\centering
		\begin{tabular}{@{}c@{}}
			\includegraphics[width=1\linewidth]{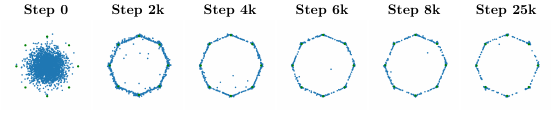}
		\end{tabular}
		\begin{tabular}{@{}c@{}}
			\includegraphics[width=1\linewidth]{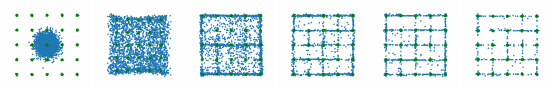}
		\end{tabular}
		\caption{Figure accompanying Table \ref{tab:Artificial}, the progress of BuresGAN on the synthetic examples. Each column shows $3$k samples from the training of the generator in blue and $3$k samples from the true distribution in green.}
		\label{fig:progressSynt:AltBures}
	\end{figure*}

	\begin{table}[h]
		\caption{Experiments on the synthetic data sets. Average (std) over 10 runs. All the models are trained for 25k iterations.}
		
		\label{tab:Artificial}
		\begin{center}

			\begin{tabular}{l l l l l }
				\toprule
				& \multicolumn{2}{c}{Grid (25 modes)} & \multicolumn{2}{c}{Ring  (8 modes)}\\
				\cmidrule(lr){2-3}\cmidrule(lr){4-5}
				& Nb modes & $\%$ in $3\sigma$ & Nb modes & $\%$ in $3\sigma$ \\
				\cmidrule(lr){2-2}  \cmidrule(lr){3-3}\cmidrule(lr){4-4} \cmidrule(lr){5-5}
				GAN  & $22.9(4)$ & $76(13)$   & $7.4(2)$ & $76(25)$ \\
				WGAN-GP  &$24.9(0.3)$ & $77(10)$   & $7.1(1)$ & $9(5)$ \\
				MDGAN-v2  &$\bm{25}(0)$ & $68(11)$    & $5(3)$ & $20(15)$\\
				Unrolled  &$19.7(1)$ & $78(19)$  & $\bm{8}(0)$ & $77(18)$  \\
				VEEGAN &$\bm{25}(0)$ & $67(3)$   & $\bm{8}(0)$ & $29(5)$ \\
				GDPP  & $20.5(5)$& $ 79(23)$    & $7.5(0.8)$ & $73(25)$  \\
				PacGAN2  & $23.6 (4)$ & $65 (28)$   & $\bm{8} (0)$  & $81 (15)$  \\
				BuresGAN (ours) & $\bm{25}(0)$ & $\bm{82}(1)$  & $ \bm{8}(0)$ & $\bm{82}(4)$\\
				\bottomrule
			\end{tabular}

		\end{center}
	\end{table}
	\paragraph{Stacked MNIST.}
	The Stacked MNIST data set is specifically constructed to contain $1000$ known modes. This is done by stacking three digits, sampled uniformly at random from the original MNIST data set, each in a different channel. 
	BuresGAN is compared to the other methods and are trained for $25$k iterations. 
	For the performance evaluation, we follow~\cite{metz2016unrolled} and use the following metrics:  the number of captured modes measures mode collapse and the KL divergence, which also measures sample quality. The mode of each generated image is identified by using a standard MNIST classifier which is trained up to $98.43\%$ accuracy on the validation set
	(see Supplementary Material), and classifies each channel of the fake sample. 
	The same classifier is used to count the number of captured modes. The metrics are calculated based on $10$k generated images for all the models.
	Generated samples from BuresGAN are given in Figure \ref{fig:Mixed}.
	\begin{figure*}[h]
		\centering
		\includegraphics[width=\linewidth]{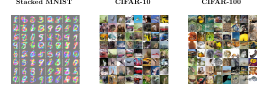}
		\caption{Generated samples from a trained BuresGAN, with a DCGAN architecture.
		}
		\label{fig:Mixed}
	\end{figure*}
	As it was observed by previous papers such as in~\cite{PacGAN} and~\cite{BourGAN}, even the vanilla GAN can achieve an excellent mode coverage for certain architectures.
	
	To study mode collapse on this data set, we performed extensive experiments for multiple discriminator layers and multiple batch sizes. In the main section, the results for a 3 layer discriminator are reported in Table~\ref{tab:SMNIST_3}. Additional experiments can be found in SM. An analogous experiment as in  VEEGAN~\cite{srivastava2017veegan} with a challenging architecture including 4 convolutional layers for both the generator and discriminator was also included for completeness; see Table~\ref{tab:SMNIST_VEEGAN}. Since different authors, such as in the PacGAN's paper, use slightly different setups, we also report in SM the specifications of the different settings. We want to emphasize the interests of this simulation is to compare GANs in the same consistent setting, while the results may vary from those reported in e.g. in~\cite{srivastava2017veegan} since some details might differ.

	\begin{table}[h]
		\centering
		\caption{KL-divergence between the generated  distribution and true distribution for an architecture with 3 conv. layers for the Stacked MNIST dataset. The number of counted modes assesses mode collapse. The results are obtained after $25$k iterations and we report the average(std) over $10$ runs.
			\label{tab:SMNIST_3}}
		\begin{center}

			\resizebox{\textwidth}{!}{
				
				\begin{tabular}{r l c c c c c c}
					\toprule
					& & \multicolumn{3}{c}{Nb modes($\uparrow$)} & \multicolumn{3}{c}{KL div.($\downarrow$)} \\ 
					\cmidrule(lr){3-5}\cmidrule(lr){6-8}
					&  Batch size & 64 & 128 & 256  & 64 & 128 & 256 \\ 
					\cmidrule(lr){2-2}  \cmidrule(lr){3-3}\cmidrule(lr){4-4} \cmidrule(lr){5-5}\cmidrule(lr){6-6}\cmidrule(lr){7-7}\cmidrule(lr){8-8} 
					\multirow{10}{*}{\rotatebox{90}{3 conv. layers}} &  GAN & $993.3(3.1)$& $995.4(1.7)$  & $\bm{998.3}(1.2)$  &$ 0.28(0.02)$ & $\bm{0.24}(0.02)$ &$ 0.21(0.02) $\\
					& WGAN-GP & $980.2(57)$ & $838.3(219)$  & $785.1(389)$  & $\bm{0.26}(0.34)$ & $1.05(1)$   & $1.6(2.4)$ \\
					&MDGAN-v1 & $233.8(250)$ &  $204.0(202)$ & $215.5(213)$  & $5.0(1.6)$ & $4.9(1.3)$  &  $5.0(1.2)$ \\
					&MDGAN-v2 & $299.9(457)$ & $300.4(457)$  & $200.0(398)$  & $4.8(3.0)$ & $4.7(3.0)$  & $5.5(2.6)$  \\
					& UnrolledGAN  & $934.7(107)$ & $874.1(290)$  & $884.9(290)$  & $0.72(0.51)$ & $0.98(1.46)$  & $0.90(1.4)$ \\
					&VEEGAN  & $974.2(10.3)$ & $687.9(447)$   & $395.6(466)$  & $0.33(0.05)$ & $2.04(2.61)$  & $3.52(2.64)$  \\
					& GDPP  & $894.2(298)$ & $897.1(299)$   &$997.5(1.4) $ & $0.92(1.92)$ & $0.88(1.93) $ &$\bm{0.20}(0.02)$  \\
					& PacGAN2  & $  989.8(4.0)$  & $993.3 (4.8)$   & $897.7(299)$  & $0.33(0.02)$  & $ 0.29(0.04)$  &  $0.87(1.94)$ \\
					& BuresGAN (ours)   &  $\bm{993.5}(2.7)$ & $\bm{996.3}(1.6)$  & $997.1(2.4)$  & $0.29(0.02)$ & $0.25(0.02)$  & $0.23(0.01)$  \\
					\bottomrule
				\end{tabular}
			}

		\end{center}
	\end{table}

	\begin{table}[h]
		\centering
		\caption{ Stacked MNIST experiment for an architecture with 4 conv. layers. All the models are trained for $25$k iterations with a batch size of $64$, a learning rate of $2\times 10^{-4}$ for Adam and a normal latent distribution. The evaluation is over $10$k samples and we report the average(std) over $10$ runs.
			\label{tab:SMNIST_VEEGAN}}
		\begin{center}

			\begin{tabular}{r c c c}
				\toprule
				& & Nb modes($\uparrow$) & KL div.($\downarrow$)  \\
				\cmidrule(lr){2-4}
				\multirow{9}{*}{\rotatebox{90}{4 conv. layers}} &  GAN & $ 21.6(25.8) $ & $ 5.10(0.83) $\\%
				& WGAN-GP  & $ \bm{999.7}(0.6)  $  & $ \bm{0.11}(0.006) $ \\
				& MDGAN-v2  & $ 729.5(297.9)  $  & $ 1.76(1.65) $ \\
				& UnrolledGAN  & $ 24.3(23.61)$  & $ 4.96(0.68) $ \\
				& VEEGAN  & $ 816.1(269.6)  $  & $ 1.33(1.46) $ \\
				& GDPP  & $ 33.3(39.4)  $  & $ 4.92(0.80) $ \\
				& PacGAN2  & $ 972.4(12.0) $  & $ 0.45(0.06) $ \\
				& BuresGAN (ours)   &  $ 989.9(4.7) $ & $ 0.38(0.06)  $ \\
				\bottomrule
			\end{tabular}

		\end{center}
	\end{table}
	
	Interestingly, for most models, an improvement is observed in the quality of the images -- KL divergence -- and in terms of mode collapse -- number of modes attained -- as the size of the batch increases. For the same batch size, architecture and iterations, the image quality is improved by BuresGAN, which consistently performs well over all settings. The other methods show a higher variability over the different experiments. MDGANv2, VEEGAN, GDPP and WGAN-GP often have an excellent single run performance. However, when increasing the number of discriminator layers, the training of these models has a tendency to collapse more often as indicated by the large standard deviation.
	Vanilla GAN is one of the best performing models in the variant with $3$ layers. 
	
	We observe in Table~\ref{tab:SMNIST_VEEGAN} for the additional experiment that vanilla GAN and GDPP collapse for this architecture. WGAN-GP yields the best result and is followed by BuresGAN. However, as indicated in Table~\ref{tab:SMNIST_3}, WGAN-GP is sensitive to the choice of architecture and hyperparameters and its training time is also longer as it can be seen from the corresponding timings table in SM. More generally, these results depend heavily on the precise architecture choice and to a lesser extent on the batch size. These experiments further confirm the finding that most GAN models, including the standard version, can learn all modes with careful and sufficient architecture tuning~\cite{PacGAN,numericsofgans}.
	Finally, it can be concluded that BuresGAN 
	performs well for all settings, showing that it is robust when it comes to batch size and architecture.

	\subsection{Real Images}
	\paragraph{Metrics.}
	Image quality is assessed thanks to the Inception Score (IS), Fr\'echet Inception Distance (FID) and Sliced Wasserstein Distance (SWD). The latter was also used in~\cite{elfeki2018learning} and~\cite{karras2017progressive} to evaluate image quality as well as mode-collapse. In a word, SWD evaluates the multiscale statistical similarity between distributions of local image patches drawn from Laplacian pyramids. A small Wasserstein distance indicates that the distribution of the
	patches is similar, thus real and fake images appear similar in both appearance and variation at this spatial resolution. 
	The metrics are calculated based on $10$k generated images for all the models.
	\paragraph{CIFAR data sets.}
	We evaluate the GANs on the $32\times 32\times 3$ CIFAR data sets, for which all models are trained for 100k iterations with a convolutional architecture. In Table~\ref{tab:CIFAR},
	the best performance is observed for BuresGAN in terms of image quality, measured by FID and Inception Score, and in terms of mode collapse,  measured by SWD. We also notice that UnrolledGAN, VEEGAN and WGAN-GP have difficulty converging to a satisfactory result for this architecture. This is contrast to the `simpler' synthetic data and the Stacked MNIST data set, where the models attain a performance comparable to  BuresGAN.  Also, for this architecture and number of training iterations, MDGAN did not converge to a meaningful result in our simulations. In~\cite{arjovsky2017wasserstein}, WGAN-GP achieves a very good performance on CIFAR-10 with a ResNet architecture which is considerably more complicated than the DCGAN used here. Therefore, results with a Resnet architecture are reported in Section~\ref{sec:ResNet}.
	
	\begin{table}[h]
		\caption{Generation quality on CIFAR-10, CIFAR-100 and STL-10 with DCGAN architecture. For the CIFAR images, Average(std) over 10 runs and $100$k iterations for each. For the STL-10 images, Average(std) over 5 runs and $150$k iterations for each.
			For improving readability, SWD score was multiplied by $100$. The symbol `$\sim$' indicates that no meaningful result could be obtained for these parameters.}
		\label{tab:CIFAR}
		\begin{center}

			\resizebox{\textwidth}{!}{
				
				\begin{tabular}{l c c c c  c c  c cc}
					\toprule
					& \multicolumn{3}{c}{CIFAR-10} & \multicolumn{3}{c}{CIFAR-100} & \multicolumn{3}{c}{STL-10}\\ 
					\cmidrule(lr){2-4}\cmidrule(lr){5-7} \cmidrule(lr){8-10}
					& IS$(\uparrow$) & FID$(\downarrow$) &  SWD$(\downarrow$)  & IS$(\uparrow$) & FID$(\downarrow$) &  SWD$(\downarrow$) & IS$(\uparrow$) & FID$(\downarrow$) &  SWD$(\downarrow$) \\
					\cmidrule(lr){2-2}  \cmidrule(lr){3-3}\cmidrule(lr){4-4} \cmidrule(lr){5-5}\cmidrule(lr){6-6}\cmidrule(lr){7-7} \cmidrule(lr){8-8}\cmidrule(lr){9-9}\cmidrule(lr){10-10}
					GAN   & $5.67(0.22)$ & $59(8.5)$ & $3.7(0.9)$  & $5.2(1.1)$ & $91.7(66)$ &$7.8(4.9)$ & $2.9(1.8)$& $237(54)$  & $12.3(4.1)$ \\
					WGAN-GP   &  $2.01(0.47)$  & $291(87)$ & $8.3(1.9)$  & $1.2(0.5)$ & $283(113)$ & $9.7(2.5)$ & $\sim$ & $\sim$ & $\sim$\\
					UnrolledGAN  & $3.1(0.6)$  & $148(42)$  & $9.0(5)$  & $3.2(0.7)$  & $172.9(40)$  & $13.1(9.2)$ & $\sim$ & $\sim$ & $\sim$\\
					VEEGAN & $2.5(0.6)$ & $198(33.5)$ &  $12.0(3)$  &$2.8(0.7)$  & $177.2(27)$ & $12.8(3.9)$ & $\sim$ & $\sim$ & $\sim$\\
					GDPP   & $5.76(0.27)$ & $62.1(5.5)$  & $4.1(1.1)$  & $5.9(0.2)$ & $65.0(8)$ & $4.4(1.9)$ & $3.3(2.2)$& $232(84)$ & $8.2(4.0)$\\ 
					PacGAN2     & $5.51(0.18)$ & $60.61(5.9)$ & $2.95(1)$  & $5.6 (0.1)$ & $59.9 (5.2)$ & $4.0 (1.8)$ & $4.7(1.5)$ & $161(36)$& $8.1(4.3)$\\
					BuresGAN (ours)   & $\bm{6.34}(0.17)$  & $\bm{43.7}(0.9)$ & $\bm{2.1}(0.6)$   & $\bm{6.5}(0.1)$  & $\bm{47.2}(1.2)$ & $\bm{2.1}(1.0)$ & $\bm{7.6}(0.3)$& $\bm{109}(7)$ & $\bm{2.3}(0.3)$\\
					\bottomrule
				\end{tabular}
			}

		\end{center}
	\end{table}
	CIFAR-100 data set consists of $100$ different classes and is therefore more diverse. 
	Compared to the original CIFAR-10 data set, the performance of the studied GANs remains almost the same. An exception is vanilla GAN, which shows a higher presence of mode collapse as measured by SWD.

	\paragraph{STL-10.}
	
	\begin{figure*}[h]
		\centering
		\includegraphics[width=0.4\linewidth]{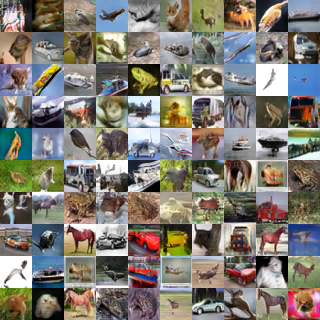}
		\qquad
		\includegraphics[width=0.4\linewidth]{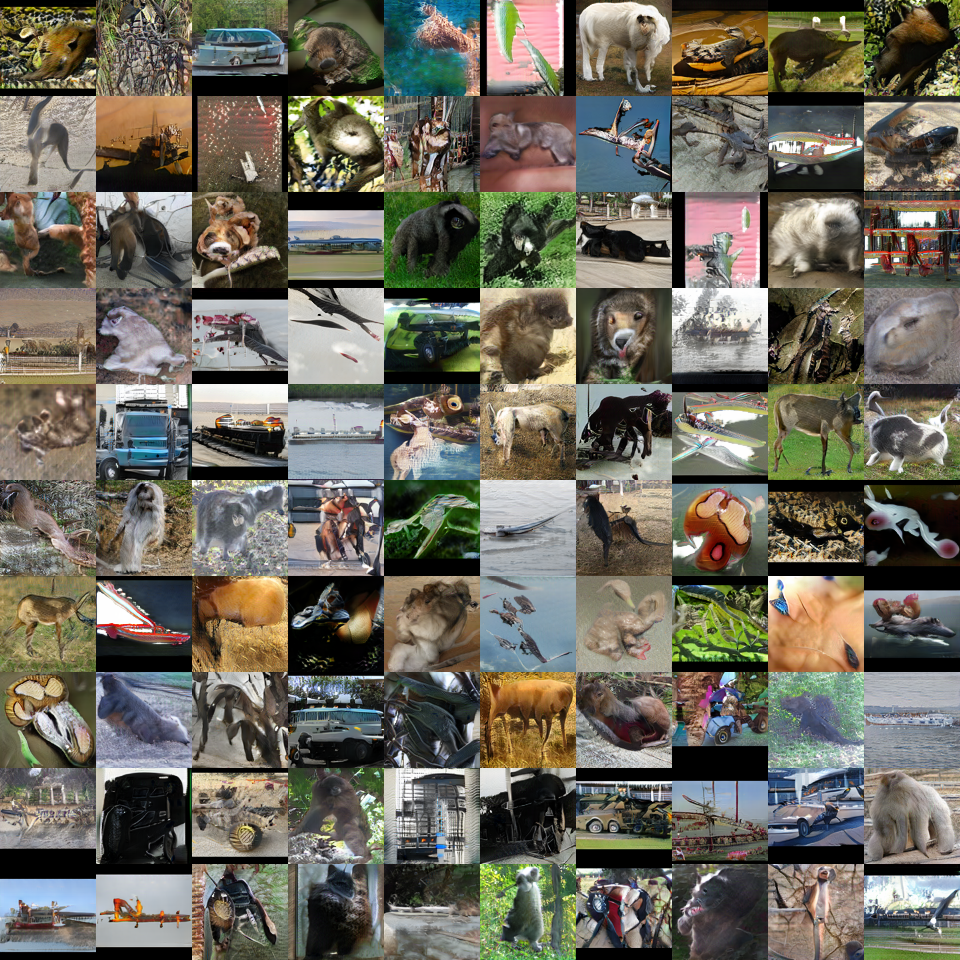}
		\caption{Images generated by BuresGAN with a ResNet architecture for {CIFAR-10} (left) and {STL-10} (right) data sets. The STL-10 samples are full-sized $96\times 96$ images.}
		\label{fig:STL10}
	\end{figure*}

	The STL-10  data set 
	includes higher resolution images of size $96 \times 96 \times3$.
	The best performing models from previous experiments are trained for $150$k iterations. Samples of generated  images from BuresGAN are given on Figure \ref{fig:STL10}. 
	The metrics are calculated based on $5$k generated images for all the models. Compared to the previous data sets, GDPP and vanilla GAN are rarely able to generate high quality images on the higher resolution STL-10  data set. Only BuresGANs are capable of consistently generating high quality images as well as preventing mode collapse, for the same architecture.
	%

	%
	%
	%
	%
	%
	\paragraph{Timings.}
	The computing times for these data sets are in SM. For the same number of iterations, BuresGAN training time is comparable to WGAN-GP training for the simple data in Table~\ref{tab:Artificial}. For more complicated architectures, BuresGAN scales better and the training time was observed to be significantly shorter with respect to WGAN-GP and several other methods.
	
	\section{High Quality Generation using a ResNet Architecture \label{sec:ResNet}}
	
	\begin{table}[h]
		\caption{Best achieved IS and FID, using a ResNet architecture. Results with an asterisk are quoted from their respective papers (std in parenthesis). BuresGAN results were obtained after $300k$ iterations and averaged over 3 runs. The result indicated with $\dagger$ are taken from~\cite{Wu2018WassersteinDF}. For all the methods, the STL-10 images are rescaled to $48\times 48\times 3$ in contrast with Table~\ref{tab:CIFAR}.}
		\label{tab:resnet}
		\begin{center}

			\begin{tabular}{l c c c c}
				\toprule
				&  \multicolumn{2}{c}{CIFAR-10}   & \multicolumn{2}{c}{STL-10} \\
				\cmidrule(lr){2-3} \cmidrule(lr){4-5} 
				&  IS ($\uparrow$) & FID ($\downarrow$) &  IS ($\uparrow$) & FID ($\downarrow$) \\
				\cmidrule(lr){2-2} \cmidrule(lr){3-3} \cmidrule(lr){4-4} \cmidrule(lr){5-5} 
				WGAN-GP ResNet~\cite{gulrajani2017improved}$^*$ & $7.86(0.07)$ & $18.8^{\dagger}$ & / & / \\
				InfoMax-GAN~\cite{InfoMax-GAN}$^*$ & $8.08(0.08)$ & $17.14 (0.20)$ & $8.54(0.12)$ & $37.49 (0.05)$\\
				SN-GAN ResNet~\cite{miyato2018spectral}$^*$  & $8.22(0.05)$ & $21.7(0.21)$ & $9.10(0.04)$ & $40.1 (0.50)$ \\
				ProgressiveGAN~\cite{karras2017progressive}$^*$  & $8.80(0.05)$ & / & / & /\\
				CR-GAN~\cite{zhang2019consistency}$^*$  & $8.4$ & $14.56$ & / & /\\
				NCSN~\cite{song2019generative}$^*$ &  $\bm{8.87}(0.12)$ & $25.32$ & / & /  \\
				Improving MMD GAN~\cite{wang2018improving}$^*$ &  $8.29$ & $16.21$ & $9.34$ & $37.63$  \\
				WGAN-div~\cite{Wu2018WassersteinDF}$^*$ & / & $18.1^{\dagger}$ & /  & / \\
				BuresGAN ResNet (Ours) &  $8.81(0.08)$ & $\bm{12.91} (0.40)$ & $\bm{9.67}(0.19)$ & $\bm{31.42}(1.01)$ \\
				\bottomrule
			\end{tabular}

		\end{center}
	\end{table}

	As noted by~\cite{lucic2018gans}, a fair comparison should involve GANs with the same architecture, and this is why we restricted in our paper to a classical DCGAN architecture. It is natural to question the performance of BuresGAN with 
	a ResNet architecture. Hence, we trained BuresGAN on the CIFAR-10 and STL-10 data sets by using the ResNet architecture taken from~\cite{gulrajani2017improved}. In this section, the STL-10 images are rescaled to a resolution of $48\times 48 \times 3$ according the procedure described in~\cite{miyato2018spectral,InfoMax-GAN,wang2018improving}, so that the comparison of IS and FID scores with other works is meaningful.   Note that BuresGAN has no parameters to tune, except for the hyperparameters of the optimizers. 
	
	The results are displayed in Table~\ref{tab:resnet}, where the scores of state-of-the-art unconditional GAN models with a ResNet architecture are also reported. 
	In contrast with Section~\ref{sec:ModeCollapse}, we report here the best performance achieved at any time during the training, averaged over several runs. To the best of our knowledge, our method achieves a new state of the art inception score on STL-10 and is within a standard deviation of state of the art on CIFAR-10 using a ResNet architecture. The FID score achieved by BuresGAN is nonetheless smaller than the reported FID scores for GANs using a ResNet architecture. A visual inspection of the generated images in Figure~\ref{fig:STL10} shows that the high inception score is warranted, the samples are clear, diverse and often recognizable. BuresGAN also performs well on the full-sized STL-10 data set where an inception score of $11.11 \pm{0.19}$ and an FID of $50.9 \pm{0.13}$ is achieved (average and std over $3$ runs).
	\section{Conclusion}
	In this work, we discussed an additional term based on the Bures distance which 
	promotes a matching of the distribution of the generated and real data in a feature space $\mathbb{R}^{d_\phi}$. The Bures distance admits both a feature space and kernel based expression, which makes the proposed model time and data efficient when compared to state of the art models.
	Our experiments show that the proposed methods are capable of reducing mode collapse and, on the real data sets, achieve a clear improvement of sample quality without parameter tuning and without the need for regularization such as a gradient penalty. Moreover, the proposed GAN shows a stable performance over different architectures, data sets and hyperparameters.  
	
	\FloatBarrier
	\bibliographystyle{unsrt}
	\bibliography{Archive}

	\appendix
	\section{Organization}
	In Section~\ref{sec:TheoDetails},  some details about our theoretical results are given. Next, additional experiments are described in Section~\ref{sec:AddExp}. Importantly, the latter section includes
	\begin{itemize}
		\item another training algorithm (Alt-BuresGAN), which is parameter-free and achieves comptetitive results with BuresGAN,
		\item an ablation study comparing the effect of the Bures metric with the Frobenius norm,
		\item timings per iteration for all the generatives methods listed in this paper,
		\item supplementary tables for the experiments described in the paper,
		\item a more extensive study on Stacked MNIST of the influence of the batch size for a simpler DCGAN architecture with 2 convolutional layers in the discriminator.
	\end{itemize}
	Next, illustrative figures including generated images are listed in Section~\ref{sec:AddFig}. Finally, Section~\ref{sec:Arch} gives details about the architectures used in this paper.
	
	
	\section{Details of the theoretical results\label{sec:TheoDetails}}
	Let $A$ and $B$ be symmetric and positive semi-definite matrices. Let $A^{1/2}= U\diag(\sqrt{\lambda})U^\top$ where $U$ and $\lambda$ are obtained thanks to the eigenvalue decomposition $A = U \diag(\lambda) U^\top$.  We show here that the Bures distance between $A$ and $B$ is
	\begin{equation}
	\mathcal{B}\left({A}, {B}\right)^{2} =\min_{U\in O(\ell)}\|A^{1/2}-B^{1/2}U\|^2_F =  \mathrm{Tr}({A}+{B}-2({A}^{\frac{1}{2}} {B}{A}^{\frac{1}{2}})^{\frac{1}{2}}),\label{eq:SuppBures0}
	\end{equation}
	
	where $O(\ell)$ is the set of $\ell\times \ell$ orthonormal matrices.
	We can simplify the above expression as follows
	\begin{equation}
	\min_{U\in O(\ell)}\|A^{1/2}-B^{1/2}U\|^2_F = \Tr(A)+\Tr(B) -2\max_{U\in O(\ell)}\Tr(A^{1/2}B^{1/2}U)\label{eq:SuppBures}
	\end{equation}
	since $\Tr(U^\top B^{1/2}A^{1/2}) = \Tr(A^{1/2}B^{1/2}U)$. Let the characteristic function of the set of orthonormal matrices be  $f(U) = \chi_{O(\ell)}(U)$ that is, $f(U)=0$ if $U\in O(\ell)$ and $+\infty$ otherwise.
	\setcounter{mylemma}{1}
	\begin{mylemma}
		The Fenchel conjugate of $f(U) = \chi_{O(\ell)}(U)$ is $f^\star(M) = \|M\|_\star$,
		where the nuclear norm is $\|M\|_\star = \Tr(\sqrt{M^\top M})$ and $U,M\in\mathbb{R}^{\ell\times\ell}$.
	\end{mylemma}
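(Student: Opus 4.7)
The plan is to compute $f^\star(M)$ directly from the definition of the Fenchel conjugate and reduce it, via the singular value decomposition, to a maximization whose answer is manifestly the sum of singular values.

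By the definition of the Fenchel conjugate, and since $f(U)=0$ on $O(\ell)$ and $+\infty$ elsewhere,
\begin{equation*}
f^\star(M) = \sup_{U\in\mathbb{R}^{\ell\times\ell}}\bigl(\Tr(M^\top U) - f(U)\bigr) = \sup_{U\in O(\ell)} \Tr(M^\top U).
\end{equation*}
So the statement reduces to showing that this last supremum equals $\|M\|_\star = \Tr(\sqrt{M^\top M})$.

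First I would introduce the singular value decomposition $M = P\Sigma Q^\top$, where $P,Q\in O(\ell)$ and $\Sigma = \diag(\sigma_1,\ldots,\sigma_\ell)$ collects the (nonnegative) singular values. By cyclicity of the trace,
\begin{equation*}
\Tr(M^\top U) = \Tr(Q\Sigma P^\top U) = \Tr(\Sigma P^\top U Q).
\end{equation*}
Since the map $U\mapsto V := P^\top U Q$ is a bijection of $O(\ell)$ onto itself, the supremum over $U\in O(\ell)$ equals the supremum over $V\in O(\ell)$, namely $\sup_{V\in O(\ell)} \Tr(\Sigma V) = \sup_{V\in O(\ell)} \sum_{i=1}^\ell \sigma_i V_{ii}$.

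For any $V\in O(\ell)$, each column (and row) has unit Euclidean norm, hence $|V_{ii}|\leq 1$. Combined with $\sigma_i\geq 0$, this yields the upper bound $\sum_i \sigma_i V_{ii} \leq \sum_i \sigma_i = \|M\|_\star$. The bound is attained by the choice $V = \mathbb{I}_\ell\in O(\ell)$, which gives equality. Hence $f^\star(M) = \sum_i \sigma_i = \Tr(\sqrt{M^\top M}) = \|M\|_\star$, as claimed. The only mildly subtle point is the change of variables $V = P^\top U Q$, which relies on $O(\ell)$ being a group; everything else is a direct computation from the SVD.
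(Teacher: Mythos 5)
Your proof is correct and follows essentially the same route as the paper: reduce the conjugate to $\sup_{U\in O(\ell)}\Tr(M^\top U)$, insert the SVD of $M$, absorb the orthogonal factors by a change of variables in $O(\ell)$, and maximize $\Tr(\Sigma V)$ at $V=\mathbb{I}$ using that diagonal entries of an orthogonal matrix are bounded by one. The only difference is cosmetic — you spell out the bound $|V_{ii}|\leq 1$, which the paper leaves implicit.
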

	\begin{proof}
		The definition of the Fenchel conjugate with respect to the Frobenius inner product gives
		\[
		f^\star(M) =  \sup_{U\in\mathbb{R}^{\ell\times\ell}} \Tr(U^\top M) - f(U) = \max_{U\in O(\ell)}\Tr(U^\top M).
		\]
		Next we decompose $M$ as follows: $M = W \Sigma V^\top$, where $W,V\in O(\ell)$ are orthogonal matrices and $\Sigma$ is a $\ell\times\ell$ diagonal matrix with non negative diagonal entries, such that $MM^\top = W \Sigma^2 W^\top$ and $M^\top M = V \Sigma^2 V^\top$. Notice that the non zero diagonal entries of $\Sigma$ are the singular values of $M$.
		Then, 
		\[
		\max_{U\in O(\ell)}\Tr(U^\top M) = \max_{U\in O(\ell)}\Tr( W \Sigma V^\top U) = \max_{U'\in O(\ell)}\Tr(\Sigma U'),
		\]
		where we renamed $U' = V^\top U W$. Next, we remark that  $\Tr(\Sigma U')=\Tr(\Sigma \diag(U'))$. Since by construction,  $\Sigma$ is diagonal with non negative entries the maximum is attained at $U' = \mathbb{I}$. Then, the optimal objective is $\Tr(\Sigma) = \Tr(\sqrt{M^\top M})$.
	\end{proof}
	By taking $M = A^{1/2}B^{1/2}$ we obtain \eqref{eq:SuppBures0}. Notice that the role of $A$ and $B$ can be exchanged in \eqref{eq:SuppBures0} since $U$ is orthogonal.

	\FloatBarrier
	\section{Additional Experiments \label{sec:AddExp}}
	\subsection{Alternating training}
	In the sequel, we also provide the results obtained thanks to Alt-BuresGAN, described in Algorithm~\ref{alg:Alt-BuresGAN}, which is parameter-free and yield competitive results with BuresGAN. Nevertheless, the training time of Alt-BuresGAN is larger compared to BuresGAN.
	\begin{algorithm}[H]
		\SetAlgoLined
		Sample a real and fake batch \;
		Update $G$ by minimizing $\mathcal{B}(\hat{\bm{C}}_{r}, \hat{\bm{C}}_{g})^{2}$\;
		Update $G$ by minimizing $V_G$\;
		Update $D$ by maximizing $ -V_D$;\
		\caption{Alt-BuresGAN \label{alg:Alt-BuresGAN}}
	\end{algorithm}
	
	\subsection{Another matrix norm}
	In this section, we also compare BuresGAN with a similar construction using another matrix norm. Precisely, we also use the squared Frobenius norm (Frobenius$^2$GAN), i.e., with the following generator loss 
	$V_G + \lambda \|C_{d}-C_{g}\|_{F}^2$, 
	with $\lambda = 1$.
	We observe that
	\begin{itemize}
		\item Frobenius$^2$GAN also performs well on simple image data sets such as Stacked MNIST,
		\item Frobenius$^2$GAN has a worse performance with respect to BuresGAN on CIFAR and STL-10 data sets (see Table~\ref{tab:CIFAR_othernorms}). 
	\end{itemize}
	This observation emphasizes that the use of the Bures metric is requisite in order to obtain high quality samples for higher dimensional image data sets such as CIFAR and STL-10 and that the simpler variant based on the Frobenius norm cannot achieve similar result.
	
	\subsection{Timings \label{app:Exp_timings}}
	The timings per iteration for the experiments presented in the paper are listed in Table \ref{tab:timings}.
	Times are given for all the methods considered, although some methods do not always generate meaningful images for all datasets. The methods are timed for $50$ iterations after the first $5$ iterations, which is used as a warmup period, following which the average number of iterations per second is computed.
	The fastest method is the vanilla GAN. BuresGAN has a similar computation cost as GDPP. We observe that (alt-)BuresGAN is significantly faster compared to WGAN-GP.
	In order to obtain reliable timings, these results were obtained on the same GPU Nvidia Quadro P4000, although, for convenience, the experiments on these image datasets were executed on a machines equipped with different GPUs.
	\begin{table}[h]
		\caption{Average time per iteration in seconds for the convolutional architecture. Average over 5 runs, with std in parenthesis. The batch size is $256$.  For Stacked MNIST, we use a discriminator architecture with 3 convolutional layers (see Table~\ref{tab:gen_dis_architecture_smnist}). For readability, the computation times for the Grid and Ring dataset are multiplied by $10^{-3}$.
		}
		\label{tab:timings}
		\begin{center}

			\resizebox{\textwidth}{!}{
				\begin{tabular}{l c c c c c c }
					\toprule
					& {Grid} & {Ring} & {stacked MNIST}& {CIFAR-10} & {CIFAR-100} & {STL-10} \\ 
					\cmidrule(lr){2-2} \cmidrule(lr){3-3} \cmidrule(lr){4-4} \cmidrule(lr){5-5} \cmidrule(lr){6-6} \cmidrule(lr){7-7}
					GAN & $\bm{1.28}  (0.008)$ &    $\bm{1.28}  (0.016)$ & $\bm{0.54}(0.0005)$ & $ \bm{0.65} (0.02)$ & $\bm{0.64} (0.0008)$ &$\bm{6.00} (0.01)$\\
					WGAN-GP &  $9.0   (0.04)$ &   $8.96  (0.04)$  & $2.99(0.004)$ & $3.41 (0.009) $ & $3.41 (0.006)$ & $36.5 (0.03)$ \\
					UnrolledGAN &  $3.92  (0.008)$ &  $3.92  (0.028)$ & $1.90(0.002)$& $2.17 (0.003) $ & $2.18 (0.004)$ & $21.99 (0.06)$ \\
					MDGAN-v2 &  $3.24  (0.008)$ &  $3.24  (0.032)$   & $1.66 (0.002)$ & $1.98 (0.002) $ & $1.98 (0.002)$ & $18 (0.03)$ \\
					VEEGAN &  $1.96  (0.012)$ &  $1.96  (0.02)$  & $0.56 (0.006)$ & $0.66$ (0.006)  & $0.65 (0.004)$ & $6.10 (0.03)$\\
					GDPP & $15.68  (1.0)$  &  $16.16  (0.2)$ & $0.69(0.02)$ & $0.80 (0.02)$ & $0.80 (0.02) $ & $7.46 (0.03)$\\
					PacGAN2 &  $1.72  (0.016)$ &  $1.72  (0.024)$  & $0.77(0.006)$ &  $0.91(0.007) $ & $ 0.91(0.007)$ & $ 8.02(0.008)$\\
					\cmidrule(lr){1-7}
					Frobenius$^2$GAN &  $2.32  (0.024)$ &  $2.28  (0.028)$ & $ 0.98 (0.004) $ & $ 1.09 (0.004) $ & $ 1.09(0.003) $ & $ 13.6(0.01) $ \\
					BuresGAN & $11.16  (0.016)$ & $11.92  (0.04)$ & $0.72 (0.02)$ & $0.82 (0.001)$ & $0.82 (0.0008)$ & $7.6 (0.03)$ \\
					Alt-BuresGAN & $11.92  (0.024)$ & $11.12  (0.04)$ & $0.98 (0.008)$& $1.15 (0.007) $ & $1.15 (0.007)$ & $10.10 (0.03)$\\
					\bottomrule
				\end{tabular}
			}

		\end{center}
	\end{table}
	\subsection{Best inception scores achieved with DCGAN architecture}
	The inception scores for the best trained models are listed in Table \ref{tab:BestRuns}.
	For the CIFAR datasets, the largest inception score is significantly better than the mean for UnrolledGAN  and VEEGAN. This is the same for GAN and GDPP on the STL-10 dataset, where the methods often converge to bad results.  Only the proposed methods are capable of consistently generating high quality images over all datasets.
	
	\begin{table}[h]
		\caption{Inception Score for the best trained models on CIFAR-10, CIFAR-100 and STL-10, with a DCGAN architecture (higher is better). }
		\label{tab:BestRuns}
		\begin{center}

			\begin{tabular}{l  c c c }
				\toprule
				& {CIFAR-10} & {CIFAR-100} & {STL-10} \\ 
				\cmidrule(lr){2-2} \cmidrule(lr){3-3} \cmidrule(lr){4-4} 
				GAN   & 5.92 & 6.33 & 6.13 \\
				WGAN-GP    & 2.54 & 2.56 & / \\
				UnrolledGAN   & 4.06 & 4.14  &  /\\
				VEEGAN   & 3.51  & 3.85 & / \\
				GDPP   & 6.21 & 6.32 & 6.05 \\
				\cmidrule(lr){1-4}
				BuresGAN   & 6.69 & 6.67 &  7.94 \\
				Alt-BuresGAN   & 6.40  & 6.48 &  7.88 \\
				\bottomrule
			\end{tabular}

		\end{center}
	\end{table}
	
	\subsection{Influence of the number of convolutional layers for DCGAN architecture}
	Results for an architecture with 3 and 4 convolutional layers described in Table~\ref{tab:gen_dis_architecture_smnist} and Table~\ref{tab:SMNIST_VEEGAN} are listed in the paper. An overview of similar experiments in other papers is given in Table~\ref{tab:SMNISTparameters} hereafter.
	A more complete study is performed in this section.
	We provide here results with a DCGAN architecture for a simpler architecture with 2 convolutional layers for the discriminator (see, Table~\ref{tab:SMNIST_2} for the results and Table~\ref{tab:gen_dis_architecture_smnist} for the architecture).

	\paragraph{Discussion.} Interestingly, for most models, an improvement is observed in the quality of the images -- KL divergence -- and in terms of mode collapse -- number of modes attained -- as the size of the batch increases. For the same batch size, architecture and iterations, the image quality is improved by BuresGAN, which is robust with respect to batch size and architecture choice. The other methods show a higher variability over the different experiments. 
	WGAN-GP has the best single run performance with a discriminator with $3$ convolutional layers and has on average a superior performance when using a discriminator with $2$ convolutional layers (see Table~\ref{tab:SMNIST_2}) but sometimes fails to converge when increasing the number of discriminator layers by $1$ along with increasing the batch size. 
	MDGANv2, VEEGAN, GDPP and WGAN-GP often have an excellent single run performance. However, when increasing the number of discriminator layers, the training of these models has a tendency to collapse more often as indicated by the large standard deviation.
	Vanilla GAN is one of the best performing models in the variant with $3$ layers. This indicates that, for certain datasets, careful architecture tuning can be more important than complicated training schemes. A lesson from Table~\ref{tab:SMNIST_2} is that BuresGAN's mode coverage does not vary much if the batch size increases, although the KL divergence seems to be slightly improved.
	\paragraph{Ablation study.} Results obtained with the Frobenius norm rather than the Bures metric are also included in the tables of this section. 
	\begin{table}[h]
		\caption{Comparison for Frobenius norm. Experiments on the synthetic data sets. Average (std) over 10 runs. All the models are trained for 25k iterations.}
		\label{tab:Artificial_App}
		\begin{center}

			\begin{tabular}{l l l l l }
				\toprule
				& \multicolumn{2}{c}{Grid (25 modes)} & \multicolumn{2}{c}{Ring  (8 modes)}\\
				\cmidrule(lr){2-3}\cmidrule(lr){4-5}
				& Nb modes & $\%$ in $3\sigma$ & Nb modes & $\%$ in $3\sigma$ \\
				\cmidrule(lr){2-2}  \cmidrule(lr){3-3}\cmidrule(lr){4-4} \cmidrule(lr){5-5}
				Frobenius$^2$GAN & $ 25 (0) $ & $ 73 (1) $  & $ 8 (0)  $ & $ 59 (16) $\\         
				BuresGAN  & $25(0)$ & $82(1)$  & $ 8(0)$ & ${82}(4)$\\
				Alt-BuresGAN & $\bm{25}(0)$ & $\bm{84}(1)$  & $ \bm{8}(0)$ & $\bm{84}(6)$ \\
				\bottomrule
			\end{tabular}

		\end{center}
	\end{table}
	\begin{table}[h]
		\caption{ KL-divergence between the generated  distribution and true distribution (Quality, lower is better). The number of counted modes indicates the amount of mode collapse (higher is better). $25$k iterations and average and std over $10$ runs. Same architecture as in Table~2 in the paper with a discriminator with $2$ convolutional layers. The architecture details are in Table~\ref{tab:gen_dis_architecture_smnist}. \label{tab:SMNIST_2}}
		\begin{center}

			\begin{tabular}{r l c c c c c c}
				\toprule
				& & \multicolumn{3}{c}{Nb modes($\uparrow$)} & \multicolumn{3}{c}{KL div.($\downarrow$)} \\ 
				\cmidrule(lr){3-5}\cmidrule(lr){6-8}
				&  Batch size & 64 & 128 & 256  & 64 & 128 & 256 \\ 
				\cmidrule(lr){2-2}  \cmidrule(lr){3-3}\cmidrule(lr){4-4} \cmidrule(lr){5-5}\cmidrule(lr){6-6}\cmidrule(lr){7-7}\cmidrule(lr){8-8} 
				\multirow{10}{*}{\rotatebox{90}{2 conv. layers}} &    GAN & $970.5(5.8)$& $972.7(6.4)$ & $979(3.5)$  & $0.47(0.04)$& $0.44(0.02)$ & $0.41(0.03)$  \\
				& WGAN-GP & $\bm{996.7}(1.6)$ & $\bm{997.5}(0.9)$  & $\bm{998.1}(1.5)$   & $\bm{0.25}(0.01)$ &$\bm{0.22}(0.01)$ & $\bm{0.21}(0.05)$  \\
				& MDGAN-v1  & $115.9(197)$ & $260.9(267)$ & $134.3(157)$ & $5.5(1.4)$ & $4.9(1.7)$ & $5.8(0.9)$  \\
				& MDGAN-v2  & $698.1(456)$ & $898.4(299)$ & $599.2(488)$ & $2.2(3.0)$ & $0.86(1.9)$ & $2.8(3.2)$   \\
				& UnrolledGAN  & $953.5(11)$ & $971.4(4.8)$ & $966.2(17.3)$ & $0.71(0.06)$ & $0.60(0.04)$ & $0.58(0.10)$  \\
				& VEEGAN  & $876.7(290)$ & $688.5(443)$ & $775.9(386)$ & $0.92(1.6)$ & $1.9(2.4)$ & $1.54(2.2)$  \\
				&  GDPP  & $974.4(3.3)$ & $978.2(7.6)$ & $980.5(6.0)$ & $0.45(0.02)$ & $0.43(0.03)$ & $0.41(0.03)$  \\
				&  PacGAN2  & $969.8(6.9)$ & $971.1(3.6) $ & $977.9(4.3) $ & $0.54(0.04) $ & $0.51(0.01) $ & $0.48(0.02) $  \\
				\cmidrule(lr){2-8}
				& Frobenius$^2$GAN   &  $ 975.8(8.0) $ & $ 981.0(4.1)$  & $  981.8(3.2)  $  & $ 0.34(0.03) $ & $0.28(0.02) $  & $ 0.25(0.01) $  \\ 
				&  BuresGAN   & $973.2(1.3)$ & $979.9(4.0)$ &  $981.1(4.9)$ & $0.36(0.02)$ & $0.30(0.02)$ & $0.25(0.01)$  \\
				&   Alt-BuresGAN & $975.4(6.8)$& $978.2(5.4)$  & $980.2(3.0)$   & $0.37(0.02)$& $0.30(0.01)$ & $0.28(0.01)$   \\
				\bottomrule
			\end{tabular}

		\end{center}
	\end{table}

	\begin{table}[h]
		\caption{Ablation study on real image data sets for a convolutional architecture. Frobenius norm is also considered in the place of the Bures metric. Generation quality on CIFAR-10, CIFAR-100 and STL-10 with a DCGAN architecture. Average(std) over 10 runs. $100$k iterations for each.
			For improving readability, SWD score was multiplied by $100$.}
		\label{tab:CIFAR_othernorms}
		\begin{center}

			\resizebox{\textwidth}{!}{
				\begin{tabular}{l c c c c  c c c c  c c c c}
					\toprule
					& \multicolumn{3}{c}{CIFAR-10} & \multicolumn{3}{c}{CIFAR-100} & \multicolumn{3}{c}{STL-10}\\ 
					\cmidrule(lr){2-4}\cmidrule(lr){5-7} \cmidrule(lr){8-10}
					& IS$(\uparrow$) & FID$(\downarrow$) &  SWD$(\downarrow$)  & IS$(\uparrow$) & FID$(\downarrow$) &  SWD$(\downarrow$) & IS$(\uparrow$) & FID$(\downarrow$) &  SWD$(\downarrow$)\\ 
					\cmidrule(lr){2-2}  \cmidrule(lr){3-3}\cmidrule(lr){4-4} \cmidrule(lr){5-5}\cmidrule(lr){6-6}\cmidrule(lr){7-7} \cmidrule(lr){8-8}\cmidrule(lr){9-9}\cmidrule(lr){10-10}
					Frobenius$^2$GAN & $5.40(0.16)$ & $57.7(4.8)$ & $\bm{1.3}(0.3)$ & $5.52(0.16)$ & $60.1(4.4)$ & $\bm{1.6}(0.5)$ & $6.26 (0.15)$ & $132.18 (3.6)$ & $\bm{1.8} (0.4)$\\
					BuresGAN    & $\bm{6.34}(0.17)$  & $\bm{43.7}(0.9)$ & $2.1(0.6)$   & $\bm{6.5}(0.1)$  & $\bm{47.2}(1.2)$ & $2.1(1.0)$ & $\bm{7.6}(0.3)$& $\bm{109}(7)$ & $2.3(0.3)$\\
					Alt-BuresGAN  & $6.23(0.07)$ & $45.4(2.8)$  & $1.7(0.9)$  & $6.4(0.1)$ & $49.4(3.4)$ & $1.8(0.6)$ & $7.5(0.3)$& $110(4)$ & $2.8(0.4)$\\
					\bottomrule
				\end{tabular}
			}

		\end{center}
	\end{table}
	\begin{table}[h]
		\centering
		\caption{Ablation study for Stacked MNIST experiment for an architecture with 4 convolutional layers (see Tbale~\ref{tab:gen_dis_architecture_smnist_4layers}). All the models are trained for $25$k iterations with a batch size of $64$, a learning rate of $2\mathrm{e}{-4}$ for Adam and a normal latent distribution. The evaluation is over $10$k samples and we report an average (std) over $10$ runs. 
			\label{tab:SMNIST_VEEGAN_App}}
		\begin{center}

			\begin{tabular}{r c c c}
				\toprule
				& & Nb modes($\uparrow$) & KL div.($\downarrow$)  \\
				\cmidrule(lr){3-4}
				& Frobenius$^2$GAN  &  $984.7(6.4)  $ & $ 0.38(0.04)$ \\
				& BuresGAN   &  $ 989.9(4.7) $ & $ 0.38(0.06)  $ \\
				&  Alt-BuresGAN & $990.0(4.9) $ & $ 0.33(0.04) $ \\
				\bottomrule
			\end{tabular}

		\end{center}
	\end{table}
	\FloatBarrier
	\section{Additional Figures\label{sec:AddFig}}
	\subsection{Training visualization at different steps for the artificial data sets}
	In Figure~\ref{fig:progressSynt:ring} and Figure~\ref{fig:progressSynt:grid}, we provide snapshots of generated samples for the methods studied in the case of \textsc{Ring} and \textsc{Grid}. In both cases, BuresGAN seems to generate samples in such a way that the symmetry of the training data is preserved during training. This is also true for WGAN and MDGAN, while some other GANs seem to recover the symmetry of the problem at the end of the training phase.
	\subsection{Generated images of the trained GANs}
	Next, in Figure~\ref{fig:stacked_ALL}, Figure~\ref{fig:CIFAR10_ALL}, Figure~\ref{fig:CIFAR100_ALL} and Figure~\ref{fig:STL10_ALL}, we list samples of generated images for the models trained with a DCGAN architecture for Stacked MNIST, CIFAR-10, CIFAR-100 and STL-10 respectively.
	\begin{figure}[h]
		\centering
		\includegraphics[width=0.9\textwidth,height=0.9\textheight,keepaspectratio]{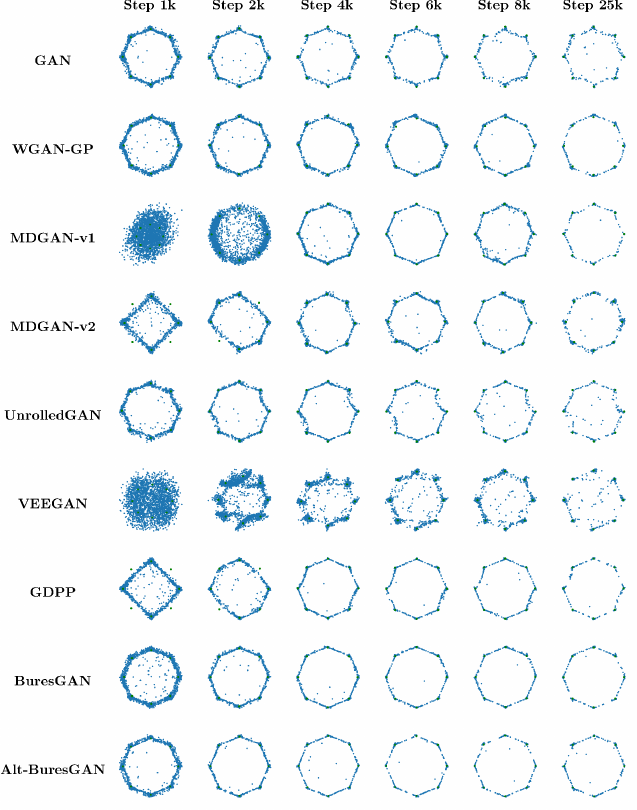}
		\caption{The progress of different GANs on the synthetic ring example. Each column show $3000$ samples from the training generator in blue with $3000$ samples from the true distribution in green.}
		\label{fig:progressSynt:ring}
	\end{figure}
	
	\begin{figure}[h]
		\centering
		\includegraphics[width=0.9\textwidth,height=0.9\textheight,keepaspectratio]{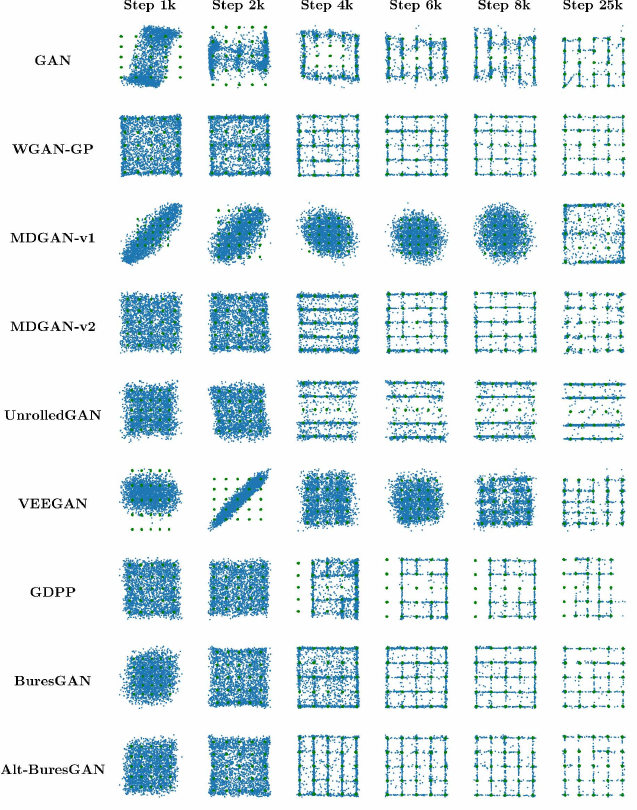}
		\caption{The progress of different GANs on the synthetic grid example. Each column show $3000$ samples from the training generator in blue with $3000$ samples from the true distribution in green.}
		\label{fig:progressSynt:grid}
	\end{figure}
	
	\begin{figure}[h]
		\centering
		\includegraphics[width=0.9\textwidth,height=0.9\textheight,keepaspectratio]{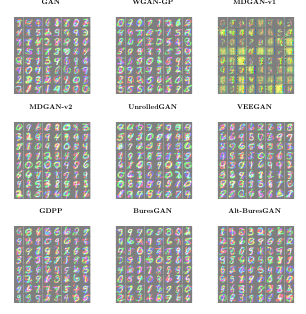}
		\caption{Generated images for the Stacked MNIST dataset. Each model is trained with $3$ layers and mini-batch size $256$. Each square shows $64$ samples from the trained generator.}
		\label{fig:stacked_ALL}
	\end{figure}
	
	\begin{figure}[h]
		\centering
		\includegraphics[width=0.9\textwidth,height=0.9\textheight,keepaspectratio]{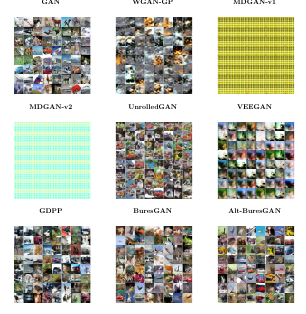}
		\caption{Generated images for CIFAR-10 using a DCGAN architecture. Each square shows $64$ samples from the trained  generator.}
		\label{fig:CIFAR10_ALL}
	\end{figure}
	
	\begin{figure}[h]
		\centering
		\includegraphics[width=0.9\textwidth,height=0.9\textheight,keepaspectratio]{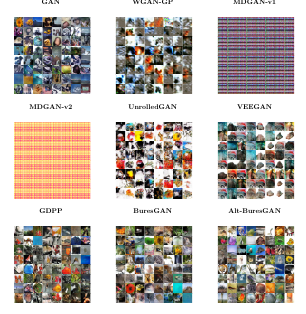}
		\caption{Generated images for CIFAR-100 using a DCGAN architecture. Each square shows $64$ samples from the trained  generator.}
		\label{fig:CIFAR100_ALL}
	\end{figure}

	\begin{figure}[h]
		\centering
		\includegraphics[width=0.9\textwidth,height=0.9\textheight,keepaspectratio]{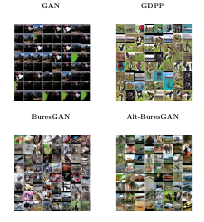}
		\caption{Generated images for STL-10 using a DCGAN architecture. Each square shows $64$ samples from the trained generator.}
		\label{fig:STL10_ALL}
	\end{figure}
	
	\FloatBarrier
	
	\section{Architectures \label{sec:Arch}}
	
	\subsection{Synthetic Architectures}
	Following the recommendation in the original work of~\cite{srivastava2017veegan}, the same fully-connected architecture is used for the VEEGAN reconstructor in all experiments.
	
	\begin{table}[h]
		\caption{The generator and discriminator architectures for the synthetic examples.}
		\label{tab:gen_dis_architecture_synthetic}
		\begin{center}

			\begin{tabular}{l c c c }
				\toprule
				Layer & Output & Activation  \\
				\cmidrule(lr){1-3}
				Input  & 256  & -  \\
				Dense  & 128  & tanh  \\
				Dense  & 128  &  tanh \\
				Dense   &  2  &  - \\
				\bottomrule
			\end{tabular} \quad \quad
			\begin{tabular}{l c c c }
				\toprule
				Layer & Output & Activation  \\
				\cmidrule(lr){1-3}
				Input  & 2  & -  \\
				Dense  & 128  & tanh  \\
				Dense  & 128  &  tanh \\
				Dense   &  1  &  - \\
				\bottomrule
			\end{tabular}

		\end{center}
	\end{table}
	
	\begin{table}[h]
		\caption{Respectively the MDGAN encoder model and VEEGAN stochastic inverse generator architectures for the synthetic examples. The output of the VEEGAN models are samples drawn from a normal distribution with scale 1 and where the location is learned.}
		\label{tab:encoder_architecture_synthetic}
		\begin{center}

			\begin{tabular}{l c c c }
				\toprule
				Layer & Output & Activation  \\
				\cmidrule(lr){1-3}
				Input  & 2  & -  \\
				Dense  & 128  & tanh  \\
				Dense   &  256  &  - \\
				\bottomrule
			\end{tabular} \quad \quad
			\begin{tabular}{l c c c }
				\toprule
				Layer & Output & Activation  \\
				\cmidrule(lr){1-3}
				Input  & 2  & -  \\
				Dense  & 128  & tanh  \\
				Dense  & 128  &  tanh \\
				Dense  & 256  &  tanh \\
				Normal   &  256  &  - \\
				\bottomrule
			\end{tabular}

		\end{center}
	\end{table}

	\subsection{Stacked MNIST Architectures}
	For clarity, we report in Table~\ref{tab:SMNISTparameters} the settings of different papers for the Stacked MNIST experiment. This helps to clarify the difference between our simulations and other papers' simulations.
	On the one hand, in Table~\ref{tab:gen_dis_architecture_smnist_4layers}, the architecture of the experiment reported in the paper is described. On the other hand, the architecture of the supplementary experiments on Stacked MNIST is given in Table~\ref{tab:gen_dis_architecture_smnist}. Details of the architecture of MDGAN are also reported in Table~\ref{tab:encoder_architecture_smnist}.
	\begin{table}[h]
		\caption{The hyperparameters and architectures of the stacked MNIST experiment performed by the different methods are given here. This represents our best efforts at providing a comparison between the different parameters used. The asterisks $^{**}$ indicate that the parameters where obtained by the Github repository. Notice that GDPP paper also used $30000$ iterations for training DCGAN and unrolled GAN (indicated by $^*$). BuresGAN's column refers to the settings of experiment of Table~2 in the paper and Table~\ref{tab:SMNIST_VEEGAN} in SM respectively, for which the different values are separated by the symbol \&.}
		\label{tab:SMNISTparameters}
		\begin{center}

			\begin{tabular}{l c c c c}
				\toprule
				Parameters  & BuresGAN & PacGAN & VEEGAN & GDPP\\
				\midrule
				learning rates & $1 \times 10^{-3}$ \& $2 \times 10^{-4}$  & $2 \times 10^{-4}$ & $2 \times 10^{-4}$ & $1 \times 10^{-4}$ $^{**}$ \\
				learning rate decay & no & no & no  &   no$^{**}$ \\
				Adam $\beta_1$  & 0.5 & 0.5 & 0.5 & 0.5\\
				Adam $\beta_2$  & 0.999 & 0.999 & 0.999 & 0.9\\
				iterations & 25000 & 20000 & ? & 15000$^*$\\
				disc. conv. layers & 3 \& 4 & 4 & 4 & 3\\
				gen. conv. layers & 3 \& 4 & 4 & 4 & 3\\
				batch size  & 64, 128, 256 \& 64 & 64 & 64 & 64\\
				evaluation samples  & 10000 & 26000 & 26000 & 26000\\
				$\ell$ (i.e., z dimension) & 100 & 100 & 100 & 128\\
				z distribution  & $\mathcal{N}(0,\mathbb{I}_\ell)$ & unif$[-1,1]^\ell$ & unif$[-1,1]^\ell$ $^{**}$ & unif$[-1,1]^\ell$\\
				\bottomrule
			\end{tabular}

		\end{center}
	\end{table}
	\begin{table}[h]
		\caption{The generator and discriminator architectures for the Stacked MNIST experiments with 4 layers for both generator and discrimintor. The BN column indicates whether batch normalization is used after the layer or not.}
		\label{tab:gen_dis_architecture_smnist_4layers}
		\begin{center}

			\begin{tabular}{l c c c c}
				\toprule
				Layer & Output & Activation & BN \\
				\cmidrule(lr){1-4}
				Input  & 100  & -  & -\\
				Dense  & 2048  & ReLU  & Yes\\
				Reshape   &  2, 2, 512  &  - & -\\
				Conv'   &  4, 4, 256  &  ReLU & Yes\\
				Conv'   &  7, 7, 128  &  ReLU & Yes\\
				Conv'   &  14, 14, 64  &  ReLU & Yes\\
				Conv'   &  28, 28, 3  &  ReLU & Yes\\
				\bottomrule
			\end{tabular} \quad \quad
			\begin{tabular}{l c c c c}
				\toprule
				Layer & Output & Activation & BN \\
				\cmidrule(lr){1-4}
				Input  & 28, 28, 3  & -  & -\\
				Conv  & 14, 14, 64  & Leaky ReLU  & No \\
				Conv  & 7, 7, 128  &  Leaky ReLU & Yes\\
				Conv  & 4, 4, 256  &  Leaky ReLU & Yes\\
				Conv  & 2, 2, 512  &  Leaky ReLU & Yes\\
				Flatten  & - &  - & - \\
				Dense  & 1  &  - & -\\
				\bottomrule
			\end{tabular}

		\end{center}
	\end{table}
	\begin{table}[h]
		\caption{The generator and discriminator architectures for the Stacked MNIST experiments. The BN column indicates whether batch normalization is used after the layer or not. For the experiments with 2 convolution layers in Table~\ref{tab:SMNIST_2}, the final convolution layer is removed in the discriminator.}
		\label{tab:gen_dis_architecture_smnist}
		\begin{center}

			\begin{tabular}{l c c c c}
				\toprule
				Layer & Output & Activation & BN \\
				\cmidrule(lr){1-4}
				Input  & 100  & -  & -\\
				Dense  & 12544  & ReLU  & Yes\\
				Reshape   &  7, 7, 256  &  - & -\\
				Conv'   &  7, 7, 128  &  ReLU & Yes\\
				Conv'   &  14, 14, 64  &  ReLU & Yes\\
				Conv'   &  28, 28, 3  &  ReLU & Yes\\
				\bottomrule
			\end{tabular} \quad \quad
			\begin{tabular}{l c c c c}
				\toprule
				Layer & Output & Activation & BN \\
				\cmidrule(lr){1-4}
				Input  & 28, 28, 3  & -  & -\\
				Conv  & 14, 14, 64  & Leaky ReLU  & No \\
				Conv  & 7, 7, 128  &  Leaky ReLU & Yes\\
				Conv  & 4, 4, 256  &  Leaky ReLU & Yes\\
				Flatten  & - &  - & - \\
				Dense  & 1  &  - & -\\
				\bottomrule
			\end{tabular}

		\end{center}
	\end{table}
	\begin{table}[h]
		\caption{The CNN architecture of the classifier used during the evaluation of the stackedMNIST experiments. Dropout with a rate of 0.5 is used before the final dense layer.}
		\label{tab:MNISTclassifier}
		\begin{center}
			
			\begin{tabular}{l c c}
				\toprule
				Layer & Output & Activation \\
				\cmidrule(lr){1-3}
				Input  & 28, 28,1  & -  \\
				Conv   &  24, 24, 32  &  ReLU \\
				MaxPool   &  12, 12, 32  &  - \\
				Conv   &  8, 8, 64  &  ReLU \\
				MaxPool   &  4, 4, 64  &  - \\
				Flatten   &  -  &  - \\
				Dense   &  1024  &  ReLU \\
				Dense   &  10  &  - \\
				\bottomrule
			\end{tabular} \quad \quad
			
		\end{center}
	\end{table}
	\begin{table}[h]
		\caption{The MDGAN encoder model architecture for the Stacked MNIST experiments. The BN column indicates whether batch normalization is used after the layer or not.}
		\label{tab:encoder_architecture_smnist}
		\begin{center}

			\begin{tabular}{l c c c c}
				\toprule
				Layer & Output & Activation & BN \\
				\cmidrule(lr){1-4}
				Input  & 28, 28, 3  & -  & -\\
				Conv   &  14, 14, 3  &  ReLU & Yes\\
				Conv   &  7, 7, 64  &  ReLU & Yes\\
				Conv   &  7, 7, 128  &  ReLU & Yes\\
				Flatten   &  -  &  - & -\\
				Dense  & 100  &  - & -\\
				\bottomrule
			\end{tabular}

		\end{center}
	\end{table}
	\FloatBarrier
	\subsection{CIFAR-10 and 100 DCGAN Architectures}
	Convolutional architectures for CIFAR data set are reported in Table~\ref{tab:gen_dis_architecture_cifar} and Table~\ref{tab:encoder_architecture_cifar} below.
	\begin{table}[h]
		\caption{The generator and discriminator architectures for the CIFAR-10 and CIFAR-100 experiments. The BN column indicates whether batch normalization is used after the layer or not.}
		\label{tab:gen_dis_architecture_cifar}
		\begin{center}

			\begin{tabular}{l c c c c}
				\toprule
				Layer & Output & Activation & BN \\
				\cmidrule(lr){1-4}
				Input  & 100  & -  & -\\
				Dense  & 16384  & ReLU  & Yes\\
				Reshape   &   8, 8, 256  &  - & -\\
				Conv'   &  8, 8, 128  &  ReLU & Yes\\
				Conv'   &  16, 16, 64  &  ReLU & Yes\\
				Conv'   &  32, 32, 3  &  ReLU & Yes\\
				\bottomrule
			\end{tabular} \quad \quad
			\begin{tabular}{l c c c c}
				\toprule
				Layer & Output & Activation & BN \\
				\cmidrule(lr){1-4}
				Input  & 32, 32, 3  & -  & -\\
				Conv  & 16, 16, 64  & Leaky ReLU  & No \\
				Conv  & 8, 8, 128  &  Leaky ReLU & Yes\\
				Conv  & 4, 4, 256  &  Leaky ReLU & Yes\\
				Flatten  & - &  - & - \\
				Dense  & 1  &  - & -\\
				\bottomrule
			\end{tabular}

		\end{center}
	\end{table}
	\begin{table}[h]
		\caption{The MDGAN encoder model architecture for the CIFAR-10 and CIFAR-100 experiments. The BN column indicates whether batch normalization is used after the layer or not.}
		\label{tab:encoder_architecture_cifar}
		\begin{center}

			\begin{tabular}{l c c c c}
				\toprule
				Layer & Output & Activation & BN \\
				\cmidrule(lr){1-4}
				Input  & 32, 32, 3  & -  & -\\
				Conv   &  16, 16, 3  &  ReLU & Yes\\
				Conv   &  8, 8, 64  &  ReLU & Yes\\
				Conv   &  8, 8, 128  &  ReLU & Yes\\
				Flatten   &  -  &  - & -\\
				Dense  & 100  &  - & -\\
				\bottomrule
			\end{tabular}

		\end{center}
	\end{table}
	\FloatBarrier
	
	\subsection{STL-10 DCGAN Architectures}
	In Table~\ref{tab:gen_dis_architecture_stl}, the convolutional architecture  is described for the $96\times 96\times 3$ STL-10 data set, while MDGAN details are reported in Table~\ref{tab:encoder_architecture_stl}.
	\begin{table}[h]
		\caption{The generator and discriminator architectures for the STL-10  experiments. The BN column indicates whether batch normalization is used after the layer or not.}
		\label{tab:gen_dis_architecture_stl}
		\begin{center}

			\begin{tabular}{l c c c c}
				\toprule
				Layer & Output & Activation & BN \\
				\cmidrule(lr){1-4}
				Input  & 100  & -  & -\\
				Dense  & 36864  & ReLU  & Yes\\
				Reshape   &    12, 12, 256  &  - & -\\
				Conv'   &  12, 12, 256  &  ReLU & Yes\\
				Conv'   &  24, 24, 128  &  ReLU & Yes\\
				Conv'   &  48, 48, 64  &  ReLU & Yes\\
				Conv'   &  96, 96, 3  &  ReLU & Yes\\
				\bottomrule
			\end{tabular} \quad \quad
			\begin{tabular}{l c c c c}
				\toprule
				Layer & Output & Activation & BN \\
				\cmidrule(lr){1-4}
				Input  & 96, 96, 3  & -  & -\\
				Conv  & 48, 48, 64  & Leaky ReLU  & No \\
				Conv  & 24, 24, 128  &  Leaky ReLU & Yes\\
				Conv  & 12, 12, 256  &  Leaky ReLU & Yes\\
				Conv  & 6, 6, 512  &  Leaky ReLU & Yes\\
				Flatten  & - &  - & - \\
				Dense  & 1  &  - & -\\
				\bottomrule
			\end{tabular}

		\end{center}
	\end{table}
	\begin{table}[h]
		\caption{The MDGAN encoder model architecture for the STL-10  experiments. The BN column indicates whether batch normalization is used after the layer or not.}
		\label{tab:encoder_architecture_stl}
		\begin{center}

			\begin{tabular}{l c c c c}
				\toprule
				Layer & Output & Activation & BN \\
				\cmidrule(lr){1-4}
				Input  & 96, 96, 3  & -  & -\\
				Conv   &  48, 48, 3  &  ReLU & Yes\\
				Conv   &   24, 24, 64  &  ReLU & Yes\\
				Conv   &  12, 12, 128  &  ReLU & Yes\\
				Conv   &   12, 12, 256  &  ReLU & Yes\\
				Flatten   &  -  &  - & -\\
				Dense  & 100  &  - & -\\
				\bottomrule
			\end{tabular}

		\end{center}
	\end{table}
	\FloatBarrier
	\subsection{ResNet Architectures}
	For CIFAR-10, we used the ResNet architecture from the appendix of~\cite{gulrajani2017improved} with minor changes as given in Table~\ref{tab:gen_dis_architecture_resnet_cifar10}. We used an initial learning rate of $5\mathrm{e}{-4}$ for CIFAR-10 and STL-10. For both datasets, the models are run for 200k iterations.
	For STL-10, we used a similar architecture that is given in Table~\ref{tab:gen_dis_architecture_resnet_stl10}.

	\begin{table}[h]
		\caption{The generator (top) and discriminator (bottom) ResNet architectures for the CIFAR-10  experiments.}
		\label{tab:gen_dis_architecture_resnet_cifar10}
		\begin{center}

			\begin{tabular}{l c c c c}
				\toprule
				Layer & Kernel Size & Resample & Output Shape \\
				\cmidrule(lr){1-4}
				Input  & -  & -  & $128$\\
				Dense  & -  & -  & $200 \cdot 4 \cdot 4$\\
				Reshape   & -  &  - & $200 \times 4 \times 4$\\
				ResBlock   &  $[3\times 3]\times 2$  &  up & $200\times8\times8$\\
				ResBlock  &  $[3\times 3]\times 2$  &  up & $200\times16\times16$\\
				ResBlock   &  $[3\times 3]\times 2$  &  up & $200\times32\times32$\\
				Conv, tanh   &  $3\times 3$  &  - & $3\times 32\times32$\\
				\bottomrule
			\end{tabular} \\ \vspace{0.5cm}
			\begin{tabular}{l c c c c}
				\toprule          
				Layer & Kernel Size & Resample & Output Shape \\
				\cmidrule(lr){1-4}
				ResBlock  & $[3\times 3]\times 2$  & Down  & $200\times16\times16$ \\
				ResBlock  & $[3\times 3]\times 2$  &  Down & $200\times8\times 8$\\
				ResBlock  & $[3\times 3]\times 2$  &  Down & $200\times4\times 4$\\
				ResBlock  & $[3\times 3]\times 2$  &  - & $200\times 4\times 4$\\
				ResBlock  & $[3\times 3]\times 2$  &  - & $200\times 4\times 4$\\
				ReLu, meanpool & - & - &200\\
				Dense & - & - &1\\
				\bottomrule
			\end{tabular}

		\end{center}
	\end{table}
	
	\begin{table}[h]
		\caption{The generator (top) and discriminator (bottom) ResNet architectures for the STL-10  experiments. For the experiment with full-sized 96x96 images, an extra upsampling block was added to the generator.}
		\label{tab:gen_dis_architecture_resnet_stl10}
		\begin{center}

			\begin{tabular}{l c c c c}
				\toprule
				Layer & Kernel Size & Resample & Output Shape \\
				\cmidrule(lr){1-4}
				Input  & -  & -  & 128\\
				Dense  & -  & -  & $200 \cdot 6 \cdot 6$\\
				Reshape   & -  &  - & $200 \times 6 \times 6$\\
				ResBlock   &  $[3\times 3]\times 2$  &  up & $200\times12\times12$\\
				ResBlock  &  $[3\times 3]\times 2$  &  up & $200\times24\times24$\\
				ResBlock   &  $[3\times 3]\times 2$  &  up & $200\times48\times48$\\
				Conv, tanh   &  $3\times 3$  &  - & $3\times 48\times48$\\
				\bottomrule
			\end{tabular} \\ \vspace{0.5cm}
			\begin{tabular}{l c c c c}
				\toprule          
				Layer & Kernel Size & Resample & Output Shape \\
				\cmidrule(lr){1-4}
				ResBlock  & $[3\times 3]\times 2$  & Down  & $200\times24\times24$ \\
				ResBlock  & $[3\times 3]\times 2$  &  Down & $200\times12\times 12$\\
				ResBlock  & $[3\times 3]\times 2$  &  Down & $200\times6\times 6$\\
				ResBlock  & $[3\times 3]\times 2$  &  - & $200\times 6\times 6$\\
				ResBlock  & $[3\times 3]\times 2$  &  - & $200\times 6\times 6$\\
				ReLu, meanpool & - & - &200\\
				Dense & - & - &1\\
				\bottomrule
			\end{tabular}

		\end{center}
	\end{table}

\end{document}